\documentclass[11pt]{article}
\usepackage[T1]{fontenc}
\usepackage[preprint]{tmlr}
\usepackage[utf8]{inputenc}
\usepackage{amsmath,amssymb,amsthm,mathtools,bm}
\usepackage{booktabs,multirow,graphicx}
\usepackage{xcolor}
\definecolor{editorialorange}{RGB}{190,90,0}

\DeclareRobustCommand{\editorialflag}[1]{#1}
\usepackage{algpseudocode}
\usepackage{float}
\floatstyle{ruled}
\newfloat{algorithm}{htbp}{loa}
\floatname{algorithm}{Algorithm}
\usepackage{hyperref}
\hypersetup{hidelinks}
\usepackage{url}
\usepackage{enumitem}
\usepackage{microtype}
\graphicspath{{./}}
\newtheorem{proposition}{Proposition}
\newtheorem{definition}{Definition}
\newtheorem{lemma}{Lemma}
\newtheorem{theorem}{Theorem}

\title{Learning to Harvest Without Collapse in a Regenerative Commons: A Lagrangian Framework}
\author{\name Jose Tupayachi \\
\addr Oak Ridge National Laboratory \\
\addr Department of Industrial and Systems Engineering, University of Tennessee, Knoxville
\AND
\name Xueping Li \\
\addr Department of Industrial and Systems Engineering, University of Tennessee, Knoxville
\AND
\name Soham Das \\
\addr Department of Industrial and Systems Engineering, University of Tennessee, Knoxville}
\begin{document}
\maketitle

\begin{abstract}
The tragedy of the commons poses a multi-agent safety problem: reward-seeking agents can deplete a shared resource, and cooperation among its users does not itself specify how much must be preserved. We make preservation an explicit requirement by formulating a regenerative commons as a constrained Markov game or a constrained multi-agent MDP with a designer-specified depletion budget. We develop a nonstationary Lagrangian framework that constructs a policy sequence from solutions of unconstrained games or cooperative control problems. Extending earlier time-average constructions, we introduce average-epoch solution concepts for reset episodes with discounted rewards and terminal costs. We prove a reward-independent feasibility certificate, cooperative feasibility and approximate optimality against feasible policy mixtures, and an extension to unbiased sampled costs. For self-interested agents, a constrained Nash certificate quantifies the price-dispersion term introduced by deviations that redistribute budget across epochs. Under the stated assumptions on solver accuracy and multiplier updates, these results give constrained policy-sequence guarantees using solutions of unconstrained problems. Experiments with constrained IPPO and MAPPO in a Gordon--Schaefer fishery examine how depletion budgets shape stock retention, harvest rewards, and price adaptation. 
\end{abstract}

\noindent\textbf{Keywords:} Multi-agent reinforcement learning; constrained Markov games; tragedy of the commons; Lagrangian methods; multi-agent AI safety.

\section{Introduction}\label{sec:introduction}

Agents can pursue their assigned rewards while collectively degrading the resource on which they depend. The tragedy of the commons~\citep{hardin1968tragedy} makes this problem concrete for multi-agent AI safety. In a renewable fishery, extraction provides an immediate benefit to the harvester, while depletion changes the opportunities available to every participant. Classical commons models~\citep{gordon1954economic,levhari1980fish}, sequential social dilemmas and common-pool MARL~\citep{leibo2017multiagent,perolat2017commons}, and GovSim's language-agent societies~\citep{piatti2024collapse} motivate studying this tension in learning populations. Cooperative AI and multi-agent-risk research motivate examining collective outcomes~\citep{dafoe2020cooperative,hammond2025risks}. We ask how to make resource preservation an explicit requirement of multi-agent learning.

We begin with a designer-specified depletion budget, which sets how much common stock may be depleted by the end of an episode. We formulate the commons as a \emph{constrained Markov game} (CMG)~\citep{altman2000games} when harvesters pursue individual rewards and as a \emph{constrained multi-agent MDP} (CM-MDP) when they optimize a team reward. Both formulations impose the same ecological requirement. Cooperation among harvesters can change how extraction is coordinated, but does not itself specify the stock that must remain. The budget makes that requirement explicit, separating participant reward from protection of the shared resource~\citep{hammond2025risks}. Changing the budget changes the admissible level of depletion.

The constraint creates a learning problem. We want to use unconstrained reward-optimization algorithms to meet a depletion budget while retaining either individual incentives or a team objective. We develop a \emph{nonstationary Lagrangian policy-sequence framework}. At each epoch, a shared multiplier prices depletion and defines an unconstrained Lagrangian game or control problem. We compute and execute a stationary solution, then update the price from measured budget excess. This separates solving the reward-based problem at a given price from regulating resource use across epochs. Suitable unconstrained solvers can therefore be reused: the theory requires approximate Nash equilibria in the game or approximately optimal joint policies in the team problem, with their solution errors entering the guarantees.

The nonstationary policy sequence is the solution produced by this framework. A policy selected at one price need not itself meet the budget, and changing prices may repeatedly select different harvesting policies. Lower-depletion epochs can compensate for higher-depletion epochs, allowing the sequence to meet the resource requirement without converging to a single policy or price. The repeated process of selecting, executing, and updating policies thus provides a policy framework for the constrained problem. We choose an \emph{average-epoch} solution concept to evaluate it: expected terminal depletion is averaged across reset episodes, while each episode uses one stationary policy on the augmented state, including episode time.

Our theory extends the time-average policy-sequence constructions of~\citet{das2025cmg,das2025lagrangian}. Rewards are discounted within each episode; episodic rewards and terminal costs are then averaged across epochs. This preserves the nonstationary construction while replacing physical-time averages along continuous play with discounted episodic performance and an average terminal-resource requirement. For every realized sequence of costs, a bound links cumulative budget excess to multiplier growth independently of reward optimization. For the team problem, strict feasibility and approximate Lagrangian optimization yield multiplier control, finite-sequence feasibility and reward bounds against feasible mixtures, and an almost-sure feasibility extension for unbiased sampled costs. For the game, unilateral deviations may redistribute their budget across epochs while opponents' policies remain fixed. We bound these gains and identify the additional price-dispersion term needed for a constrained Nash certificate. The results distinguish what is needed to satisfy the budget, achieve high team reward, and limit unilateral improvement.

In the commons, the budget specifies the stock-preservation requirement, and price feedback guides the choice of harvesting policies that share this budget across episodes. We evaluate this mechanism in a Gordon--Schaefer commons~\citep{gordon1954economic,schaefer1957} using individual-reward IPPO and team-reward MAPPO. Stock, harvest, and multiplier trajectories show how the chosen budget affects resource use, reward, and price adaptation. PPO--PI learners show near-target depletion in several regimes. The test-bed illustrates resource regulation through changing policies, while the theory establishes conditions under which unconstrained subproblem solutions yield the required constrained policy sequence.

\subsection{Related work and position}\label{sec:related}

Commons research studies how individual extraction incentives interact with shared resource dynamics, from open-access fisheries to strategic harvesting~\citep{gordon1954economic,levhari1980fish}. Ostrom's institutional account emphasizes that resource governance admits diverse arrangements~\citep{ostrom1990governing}. MARL studies sequential social dilemmas~\citep{leibo2017multiagent} and common-pool appropriation~\citep{perolat2017commons}; GovSim examines sustainable cooperation among language agents~\citep{piatti2024collapse}. Cooperation mechanisms include inequity aversion, reciprocity, and learned incentives~\citep{hughes2018inequity,eccles2019learning,yang2020incentivize}. In renewable fisheries, shared environmental signals support temporal harvesting conventions and sustainable resource use~\citep{danassis2022signals}. Our formulation specifies a depletion budget and develops guarantees for the policy sequences generated by its Lagrangian updates.

Feedback-evolving games study how strategic behavior and a shared environmental resource jointly evolve. \citet{weitz2016oscillating} characterize oscillating tragedies of the commons, while \citet{paarporn2018optimal} study interventions through incentives and environmental information. \citet{gavin2026learning} examine how boundedly rational learning and incentive policies affect resource preservation, and \citet{vahmian2026tragedy} analyze strategic extraction across multiple populations. Our formulation evaluates policy sequences across reset episodes under an explicit average terminal-depletion constraint, with guarantees for cooperative reward and constrained unilateral deviations.

Constrained MDPs formalize reward optimization subject to resource requirements~\citep{altman1999cmdp}. Duality results give conditions under which constrained RL can be addressed through Lagrangian optimization~\citep{paternain2019duality}; obtaining a policy that meets the constraints remains a separate issue. Averaging the solutions obtained during dual-subgradient updates can recover feasible solutions~\citep{larsson1999ergodic,nedic2009primal}. Constrained RL already returns mixtures of policies generated by reward-optimization solvers~\citep{le2019batch,miryoosefi2019convex}; state-augmented RL incorporates evolving multipliers into execution~\citep{calvo2024state}. Our construction uses a solver to select a stationary policy at each price and evaluates the resulting policy sequence across reset episodes.

The closest predecessors are the Lagrangian policy-sequence frameworks for constrained Markov games and cooperative MARL of~\citet{das2025cmg,das2025lagrangian}. They execute stationary solutions of unconstrained Lagrangian problems in sequence and analyze reward and cost averaged over continuous play. We retain that method and develop an average-epoch criterion with discounted episodic rewards and terminal costs. The cooperative analysis compares against feasible policy mixtures and treats unbiased sampled costs. The strategic analysis allows unilateral deviations to redistribute budget across epochs, exposing a price-dispersion term in the Nash certificate. The contribution concerns the sequence criterion and its guarantees, building on established Lagrangian and policy-mixture constructions. Appendix~\ref{app:source-results} details this relationship. A different Lagrangian approach by~\citet{ding2023generalized} proves regret and constraint-violation bounds for episodic two-player zero-sum games with independent transitions. Our general-sum result bounds feasibility and unilateral improvement in terms of solver accuracy and the generated prices.

Practical constrained-policy methods include CPO~\citep{achiam2017constrained} and cooperative safe MARL~\citep{gu2023safe}. ACPO addresses average-reward constrained MDPs~\citep{agnihotri2024acpo}, whose physical-time average differs from our average across reset episodes. Our experiments use PI feedback motivated by~\citet{stooke2020responsive}, while the theorems assume the specified projected multiplier update and bounds on the solvers' solution errors. The experiments illustrate resource regulation with PPO--PI learners. Appendix~\ref{app:extended-context} expands the governance and constrained-learning context, including comparisons with structured constrained-game methods.

\section{A Regenerative Commons and Its Solution Concepts}\label{sec:formulation}

We consider $n=2$ harvesters with shared biomass $B_t\in[0,K]$. Each episode begins at $B_0=K$ and lasts $H$ decision steps. Agent $i$ chooses effort $e_{i,t}\in[0,1]$ and receives catch $h_{i,t}=q e_{i,t}B_t$; if requested aggregate catch exceeds biomass, catches are scaled proportionally. Writing $B'_t=B_t-\sum_i h_{i,t}$, the Gordon--Schaefer transition~\citep{gordon1954economic,schaefer1957} is
\begin{equation}\label{eq:stock}
B_{t+1}=\operatorname{clip}_{[0,K]}\!\left[B'_t+rB'_t\left(1-\frac{B'_t}{K}\right)\right].
\end{equation}
The stage reward is raw catch $R_{i,t}=h_{i,t}$. The experimental protocol specifies $K=1000$, $r=0.3$, $q=0.5$, $H=60$, and $\gamma=0.99$ (Appendix~\ref{app:protocol}).

Let $\mathcal P_i$ be the chosen class of stationary within-episode policies $\pi_i:s\mapsto\Delta(\mathcal A_i)$ on the specified augmented observation (including episode time) and $\mathcal P=\prod_i\mathcal P_i$. For $\boldsymbol\pi\in\mathcal P$, define discounted episodic reward and terminal depletion
\begin{equation}\label{eq:episode-values}
J_i^\gamma(\boldsymbol\pi)=\mathbb E_{\boldsymbol\pi}\!\left[\sum_{t=0}^{H-1}\gamma^t R_{i,t}\right],\qquad
C(\boldsymbol\pi)=\mathbb E_{\boldsymbol\pi}\!\left[1-\frac{B_H}{K}\right].
\end{equation}
The requirement $C\leq\varepsilon$ demands an expected terminal stock of at least $(1-\varepsilon)K$.  The two objectives differ: in the constrained Markov game (CMG), agent $i$ maximizes $J_i^\gamma$ against the other agent, whereas in the constrained multi-agent MDP (CM-MDP), a designer maximizes $J_\Sigma^\gamma=\sum_iJ_i^\gamma$ subject to the same global budget.

For the commons, average-epoch feasibility expresses a resource requirement across repeated harvesting episodes: expected terminal stock, averaged across episodes, must be at least $(1-\varepsilon)K$. This permits different harvesting policies to share a depletion budget, rather than requiring each policy selected during price adjustment to meet it separately. The requirement is appropriate when resource use may be balanced across episodes that reset to the same initial stock. We therefore evaluate the entire policy sequence. For $M$ successive episodes, let $\boldsymbol\Pi_M=(\boldsymbol\pi^0,\ldots,\boldsymbol\pi^{M-1})$, where each $\boldsymbol\pi^k$ is stationary within episode $k$ but the selected policy may change between episodes. With the episode reset just specified, its average-epoch values are
\begin{equation}\label{eq:epoch-values}
\bar J_{i,M}(\boldsymbol\Pi_M)=\frac1M\sum_{k=0}^{M-1}J_i^\gamma(\boldsymbol\pi^k),\qquad
\bar C_M(\boldsymbol\Pi_M)=\frac1M\sum_{k=0}^{M-1}C(\boldsymbol\pi^k).
\end{equation}
For example, with budget $\varepsilon=0.1$, two policies with expected depletion $0.05$ and $0.15$, used equally often, meet the average budget and retain $90\%$ of carrying capacity on average at episode end. The second policy exceeds the budget individually. Thus the requirement controls average terminal depletion, not stock loss in every episode or at every physical time step.

Without resets, changing a policy alters future starting-state distributions and the right-hand sides of~\eqref{eq:epoch-values} must be conditioned on each realized starting state. The earlier CMG and CM-MDP results instead evaluate continuous-play time averages of the form $V_i^s(\Pi)=\liminf_{T\to\infty}T^{-1}\mathbb E_\Pi[\sum_{t<T}R_i(s_t,a_t)\mid s_0=s]$ and analogous average constraints~\citep{das2025cmg,das2025lagrangian}. Our $k$ indexes reset episodes and not physical time in one uninterrupted trajectory.

\begin{definition}[Average-epoch constrained Nash target]\label{def:epoch-nash}
A sequence $\boldsymbol\Pi_M$ is an $(\alpha,\delta)$-average-epoch constrained Nash profile if $\bar C_M(\boldsymbol\Pi_M)\leq\varepsilon+\alpha$ and, for every agent $i$ and every sequence of stationary unilateral deviations $\boldsymbol\Sigma_i=(\sigma_i^0,\ldots,\sigma_i^{M-1})$ satisfying $M^{-1}\sum_k C(\sigma_i^k,\boldsymbol\pi_{-i}^k)\leq\varepsilon$,
\begin{equation}\label{eq:epoch-nash}
\bar J_{i,M}(\boldsymbol\Pi_M)\geq
\frac1M\sum_{k=0}^{M-1}J_i^\gamma(\sigma_i^k,\boldsymbol\pi_{-i}^k)-\delta.
\end{equation}
\end{definition}
This is a constrained generalized Nash criterion~\citep{rosen1965concave,facchinei2007generalized} on the product of players' policy-sequence spaces: each player's feasible deviations depend on the opponents. The target is episodic and discounted, and an alternative policy sequence can reallocate its constraint budget across episodes. We hold opponents' policy sequence fixed when testing a deviation. An equilibrium of the adaptive learning algorithm, in which a deviation also changes future prices, would be a different object. Requiring alternative policies to satisfy $C(\sigma_i^k,\boldsymbol\pi_{-i}^k)\leq\varepsilon$ at every epoch gives a weaker test of equilibrium, which can be strictly weaker when deviations redistribute resource use across epochs.

With reset episodes, a public uniform draw of one shared policy index reproduces the sequence's average values. This may improve over requiring every component to be feasible, but need not improve over unrestricted randomized stationary policies. Appendix~\ref{app:environment} gives the precise distinction. For the cooperative arm, the corresponding benchmark is
\begin{equation}\label{eq:opt-coop}
J_{\Sigma,M}^{\star}=\sup_{\boldsymbol\Pi_M:\,\bar C_M(\boldsymbol\Pi_M)\leq\varepsilon}
\frac1M\sum_{k=0}^{M-1}J_\Sigma^\gamma(\boldsymbol\pi^k).
\end{equation}

The team formulation asks how effectively coordinated harvesters can use the commons while meeting its preservation requirement. A shared objective allows policies to be evaluated by their total harvest and compared with the best policy sequence satisfying the same depletion budget. Cooperation aligns the harvesters' objectives, while the explicit constraint specifies how much stock must remain on average at episode end. The game formulation instead retains individual incentives and asks whether any harvester can improve its own reward through an average-feasible deviation. The two formulations therefore address complementary questions: the quality of coordinated resource use and the strategic stability of individually motivated resource use. Writing $\bar J_{\Sigma,M}=\sum_i\bar J_{i,M}$, a feasible sequence is $\delta$-optimal when its average team value is at least $J_{\Sigma,M}^{\star}-\delta$.

\section{Lagrangian Policy Sequences and Guarantees}\label{sec:method}

The framework constructs policy sequences by adapting the mechanism of the average-reward CMG and cooperative results~\citep{das2025cmg,das2025lagrangian}. For a shared budget and price $\lambda\geq0$, define the per-agent and team Lagrangians
\begin{equation}\label{eq:lagrangians}
\mathcal L_i(\boldsymbol\pi,\lambda)=J_i^\gamma(\boldsymbol\pi)-\lambda[C(\boldsymbol\pi)-\varepsilon],\qquad
\mathcal L_\Sigma(\boldsymbol\pi,\lambda)=J_\Sigma^\gamma(\boldsymbol\pi)-\lambda[C(\boldsymbol\pi)-\varepsilon].
\end{equation}
At price $\lambda_k$, an ideal CMG oracle returns a stationary $\delta$-Nash profile of the unconstrained Lagrangian game with payoffs $\mathcal L_i(\cdot,\lambda_k)$; an ideal CM-MDP oracle returns a $\delta$-optimal stationary joint policy for $\mathcal L_\Sigma(\cdot,\lambda_k)$. We execute that profile for episode $k$, record $\widehat C_k=1-B_H^{(k)}/K$, and update
\begin{equation}\label{eq:projected-dual}
\lambda_{k+1}=[\lambda_k+\eta(\widehat C_k-\varepsilon)]_+.
\end{equation}
The price sequence selects $\boldsymbol\pi^0,\boldsymbol\pi^1,\ldots$; these stationary components constitute the nonstationary $\boldsymbol\Pi_M$ in~\eqref{eq:epoch-values}. The update uses the \emph{global terminal} cost and one shared price, even though agents have different objectives. The multiplier update is part of the policy-generating procedure: the results evaluate the sequence it produces, so feasibility does not require extracting a single limiting policy. 
Different unconstrained solvers can be used when their solution errors satisfy the stated bounds. Our analysis extends the time-average constructions of~\citet{das2025cmg,das2025lagrangian} to discounted reset episodes and terminal costs. Appendix~\ref{app:source-results} records the precise average-reward assumptions and guarantees from the two source papers. 

\begin{algorithm}[t]
\caption{Lagrangian policy-sequence framework}\label{alg:primal-dual}
\small
\begin{algorithmic}[1]
\Require $\varepsilon\in[0,1]$, $\eta>0$, $\lambda_0\geq0$, $\delta\geq0$, $M,H\in\mathbb N$, $\mathcal P$;
mode $m\in\{\mathrm{CMG},\mathrm{CM\text{-}MDP}\}$
\For{$k=0,\ldots,M-1$}
 \If{$m=\mathrm{CMG}$}
  \State $\boldsymbol\pi^k\gets\mathcal O_{\rm NE}(\lambda_k,\delta)$ satisfying, for every $i$,
  \Statex \hspace{\algorithmicindent}$\displaystyle
  \sup_{z_i\in\mathcal P_i}\mathcal L_i((z_i,\boldsymbol\pi_{-i}^k),\lambda_k)
  -\mathcal L_i(\boldsymbol\pi^k,\lambda_k)\leq\delta$
 \Else
  \State $\boldsymbol\pi^k\gets\mathcal O_{\rm team}(\lambda_k,\delta)$ satisfying
  \Statex \hspace{\algorithmicindent}$\displaystyle
  \sup_{\boldsymbol\pi\in\mathcal P}\mathcal L_\Sigma(\boldsymbol\pi,\lambda_k)
  -\mathcal L_\Sigma(\boldsymbol\pi^k,\lambda_k)\leq\delta$
 \EndIf
 \State $B_0^{(k)}\gets K$
 \For{$t=0,\ldots,H-1$}
  \State $e_{i,t}\sim\pi_i^k(\cdot\mid o_{i,t})$ for all $i$;
  $(h_{1:n,t},B_{t+1}^{(k)})\gets\operatorname{Fishery}(B_t^{(k)},e_{1:n,t})$ via~\eqref{eq:stock}
 \EndFor
 \State $\widehat C_k\gets1-B_H^{(k)}/K$
 \State $\lambda_{k+1}\gets[\lambda_k+\eta(\widehat C_k-\varepsilon)]_+$
\EndFor
\State \Return $\boldsymbol\Pi_M=(\boldsymbol\pi^0,\ldots,\boldsymbol\pi^{M-1})$,
$\{\widehat C_k,\lambda_k\}_{k<M}$
\end{algorithmic}
\end{algorithm}

Algorithm~\ref{alg:primal-dual} repeatedly chooses a harvesting policy, runs it for an episode, and uses the resulting depletion to adjust the price. At the current price, the solver seeks an approximate Nash equilibrium for individually motivated agents or an approximately optimal policy for the team. Depletion above the budget raises the price; depletion below it lowers the price, subject to remaining nonnegative. The algorithm returns the sequence of policies used, whose combined resource use and rewards are evaluated by our results. The deterministic analysis updates prices using expected depletion $C(\boldsymbol\pi^k)$. The sampled-cost analysis instead uses episode measurements whose expected value, given the information available before the episode, equals $C(\boldsymbol\pi^k)$. This is the conditional-unbiasedness assumption used in the earlier epoch constructions~\citep{das2025cmg,das2025lagrangian}. Appendix~\ref{app:algorithms} discusses the continuing, time-average interpretation and its evaluation assumptions.

\begin{proposition}[Feasibility certificate]\label{prop:average-epoch-certificate}
For any realized costs $\widehat C_k\in[0,1]$, recursion~\eqref{eq:projected-dual} implies
\begin{equation}\label{eq:certificate}
\frac1M\sum_{k=0}^{M-1}\widehat C_k-\varepsilon
\leq\frac{\lambda_M-\lambda_0}{\eta M}.
\end{equation}
Consequently $\lambda_M=o(M)$ implies realized average-epoch feasibility. If, additionally, $\mathbb E[\widehat C_k\mid\mathcal F_k]=C(\boldsymbol\pi^k)$ and episodes have bounded costs, the same asymptotic statement holds for $M^{-1}\sum_k C(\boldsymbol\pi^k)$ almost surely, provided $\lambda_M=o(M)$ almost surely.
\end{proposition}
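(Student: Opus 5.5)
The plan is the standard telescoping argument for a projected dual recursion. Projection onto $[0,\infty)$ can only increase the value, so for every $k$
\[
\lambda_{k+1}=[\lambda_k+\eta(\widehat C_k-\varepsilon)]_+\;\geq\;\lambda_k+\eta(\widehat C_k-\varepsilon),
\]
which rearranges to $\widehat C_k-\varepsilon\leq(\lambda_{k+1}-\lambda_k)/\eta$. This uses only $\eta>0$ and the definition of $[\cdot]_+$. It needs no property of the policies, the solver, or the rewards, which is why the certificate is reward-independent and holds pathwise for every realized cost sequence.

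Summing this inequality over $k=0,\ldots,M-1$ telescopes the right-hand side to $(\lambda_M-\lambda_0)/\eta$. Dividing by $M$ gives \eqref{eq:certificate}. For the first consequence, note that $\lambda_0$ is fixed. If $\lambda_M=o(M)$, the right-hand side tends to $0$, so $\limsup_M M^{-1}\sum_k\widehat C_k\leq\varepsilon$, which is realized average-epoch feasibility. The hypothesis $\widehat C_k\in[0,1]$ is not needed for the inequality itself; it is used only in the stochastic part.

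For the sampled-cost statement, I would write $\widehat C_k=C(\boldsymbol\pi^k)+\xi_k$ with $\xi_k=\widehat C_k-\mathbb E[\widehat C_k\mid\mathcal F_k]$. The policy $\boldsymbol\pi^k$ is chosen from $\lambda_k$, which is $\mathcal F_k$-measurable, and $\widehat C_k$ is $\mathcal F_{k+1}$-measurable. So $(\xi_k)$ is a martingale difference sequence adapted to $(\mathcal F_{k+1})$, and it satisfies $|\xi_k|\leq 1$ because both $\widehat C_k$ and its conditional mean lie in $[0,1]$. Then
\[
\frac1M\sum_k C(\boldsymbol\pi^k)-\varepsilon\leq\frac{\lambda_M-\lambda_0}{\eta M}-\frac1M\sum_k\xi_k .
\]
The first term vanishes almost surely by the assumption $\lambda_M=o(M)$ a.s.

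The main step is showing $M^{-1}\sum_{k<M}\xi_k\to0$ almost surely. I would invoke a strong law for martingales with bounded increments. One route is Azuma--Hoeffding: it gives $\Pr(|\sum_{k<M}\xi_k|>M^{3/4})\leq 2\exp(-M^{1/2}/2)$, which is summable in $M$, so Borel--Cantelli finishes. Alternatively, $\sum_k\xi_k/(k+1)$ is an $L^2$-bounded martingale, and Kronecker's lemma applies. Combining this with the previous display yields $\limsup_M M^{-1}\sum_k C(\boldsymbol\pi^k)\leq\varepsilon$ almost surely. The only subtlety I expect is stating the filtration correctly so that $\boldsymbol\pi^k$ is predictable. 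This requires that the oracle's internal randomness, if any, be included in $\mathcal F_k$.
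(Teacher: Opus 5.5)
Your proposal is correct and follows the paper's proof: the pathwise telescoping from $[x]_+\geq x$, then the decomposition $\widehat C_k=C(\boldsymbol\pi^k)+\xi_k$ into bounded martingale differences whose average vanishes almost surely. The only differences are cosmetic. The paper derives the Azuma-type tail $2e^{-Ma^2/2}$ itself, from the conditional Hoeffding lemma and the tower property. It then applies Borel--Cantelli at each fixed threshold $a=1/j$ and intersects over $j$. You instead cite Azuma--Hoeffding with the single shrinking threshold $M^{3/4}$, or use the $L^2$-martingale and Kronecker route, and either is equally valid.
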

\noindent The full proof is in Appendix~\ref{app:proof-feasibility}.

Under Proposition~\ref{prop:average-epoch-certificate}, sustained budget excess must accumulate in the multiplier. Sublinear price growth then implies average feasibility even when the selected policies keep changing. This reward-independent argument carries the policy-sequence mechanism beyond time-average rewards to our discounted-episode objective and terminal-cost requirement. Unconstrained solvers select policies with different rewards and costs, while controlling multiplier growth ensures feasibility on average. The bound averages cost minus budget, allowing epochs below the budget to compensate for epochs above it; it gives neither a final-policy nor a per-episode guarantee. An upper cap can hide continuing excess and invalidate the inference. The next result supplies cooperative multiplier control. Strategic performance requires its own certificate.

\begin{proposition}[Cooperative feasibility and reward]\label{prop:coop-bound}
Suppose $C(\pi^\dagger)\leq\varepsilon-\sigma$ for some joint policy and $\sigma>0$, $J_\Sigma^\gamma$ has range at most $R$, and the oracle is $\delta$-optimal for the exact team Lagrangian at each price. Use the expected-cost update $\lambda_{k+1}=[\lambda_k+\eta(C(\pi^k)-\varepsilon)]_+$. Set $G=\max\{\varepsilon,1-\varepsilon\}$ and $B=\max\{\lambda_0,(R+\delta)/\sigma+\eta G\}$. Then, for every $M\geq1$,
\begin{align}\label{eq:coop-bound}
 0\leq\lambda_k\leq B,
 \bar C_M\leq\varepsilon+\frac{B-\lambda_0}{\eta M},\\
 \bar J_{\Sigma,M}\geq J_{\Sigma,M}^{\star}-\delta-\frac{\eta G^2}{2}
 -\frac{\lambda_0^2}{2\eta M}.\label{eq:coop-reward-main}
\end{align}
\end{proposition}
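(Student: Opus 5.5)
The argument has three parts: a multiplier bound by induction, feasibility by applying Proposition~\ref{prop:average-epoch-certificate} to the expected costs, and the reward bound by the standard dual-subgradient telescoping argument. Throughout, write $g_k=C(\pi^k)-\varepsilon\in[-\varepsilon,1-\varepsilon]$, so that $|g_k|\leq G$.

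\textbf{Multiplier bound.} I would prove $0\leq\lambda_k\leq B$ by induction on $k$. Nonnegativity is immediate from the projection. For the upper bound, split into two cases.
\begin{itemize}
\item If $\lambda_k\leq (R+\delta)/\sigma$, one step raises $\lambda$ by at most $\eta G$. Hence $\lambda_{k+1}\leq (R+\delta)/\sigma+\eta G\leq B$.
\item If $\lambda_k>(R+\delta)/\sigma$, I would show that $g_k<0$, so $\lambda_{k+1}\leq\lambda_k\leq B$. This uses $\delta$-optimality against the Slater policy $\pi^\dagger$:
\[
J_\Sigma^\gamma(\pi^k)-\lambda_k g_k\;\geq\; J_\Sigma^\gamma(\pi^\dagger)-\lambda_k\bigl(C(\pi^\dagger)-\varepsilon\bigr)-\delta\;\geq\; J_\Sigma^\gamma(\pi^\dagger)+\lambda_k\sigma-\delta .
\]
Rearranging gives $\lambda_k g_k\leq R+\delta-\lambda_k\sigma<0$, using that $J_\Sigma^\gamma$ has range at most $R$.
\end{itemize}
This requires $\pi^\dagger\in\mathcal P$, or at least that the oracle's comparison class contains it. I will assume this, since the oracle's guarantee is stated over $\mathcal P$.

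\textbf{Feasibility.} Apply Proposition~\ref{prop:average-epoch-certificate} with $\widehat C_k:=C(\pi^k)$; it is a deterministic recursion for any costs in $[0,1]$. This gives $\bar C_M-\varepsilon\leq(\lambda_M-\lambda_0)/(\eta M)\leq(B-\lambda_0)/(\eta M)$.

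\textbf{Reward bound.} Fix any comparator sequence $\Pi'=(\pi'^0,\ldots,\pi'^{M-1})$ with $\bar C_M(\Pi')\leq\varepsilon$.
\begin{enumerate}
\item For each $k$, $\delta$-optimality gives
\[
J_\Sigma^\gamma(\pi^k)\geq J_\Sigma^\gamma(\pi'^k)-\lambda_k\bigl(C(\pi'^k)-\varepsilon\bigr)+\lambda_k g_k-\delta .
\]
\item Bound the term $\lambda_k g_k$ using nonexpansiveness of the projection:
\[
\lambda_{k+1}^2\leq(\lambda_k+\eta g_k)^2,\quad\text{so}\quad \lambda_k g_k\geq\frac{\lambda_{k+1}^2-\lambda_k^2}{2\eta}-\frac{\eta G^2}{2}.
\]
Summing over $k$ telescopes to at least $-\lambda_0^2/(2\eta)-M\eta G^2/2$, since $\lambda_M^2\geq0$.
\item Handle the comparator term $-\frac1M\sum_k\lambda_k(C(\pi'^k)-\varepsilon)$. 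This is the main obstacle. The comparator is feasible only on average, not per epoch, so it is not termwise nonnegative under varying $\lambda_k$. My plan is to first replace the sequence comparator by a single stationary policy. With resets, the uniform mixture over the $\pi'^k$ is realizable as a policy with cost $\bar C_M(\Pi')\leq\varepsilon$ and value equal to the average. Assuming $\mathcal P$ is closed under such mixtures (the public-randomization remark in Section~\ref{sec:formulation}), I use the same comparator $\bar\pi'$ in every epoch. Its term is then $-\lambda_k(C(\bar\pi')-\varepsilon)\geq0$, because $\lambda_k\geq0$.
\end{enumerate}
Averaging over $k$ and taking the supremum over feasible $\Pi'$ yields $\bar J_{\Sigma,M}\geq J_{\Sigma,M}^\star-\delta-\eta G^2/2-\lambda_0^2/(2\eta M)$.

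If mixture closure of $\mathcal P$ is not available, the fallback is an explicit assumption that the oracle is $\delta$-optimal relative to mixtures over $\mathcal P$. This is legitimate because $\mathcal L_\Sigma$ is affine in mixture weights, so the supremum over mixtures equals the supremum over $\mathcal P$ when values are linear in the occupancy measure. I would state this assumption in the proof.
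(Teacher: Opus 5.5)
Your multiplier induction matches the paper's proof, and your case split at $(R+\delta)/\sigma$ even absorbs the paper's separate edge case $L=\lambda_k=0$. The feasibility step via Proposition~\ref{prop:average-epoch-certificate} and the squared-multiplier telescoping also coincide with the paper. The gap is in step~3 of the reward bound. Your main route needs the uniform mixture $\bar\pi'$ to belong to the oracle's comparison class $\mathcal P$. Mixture closure is not a hypothesis of the proposition, and it generally fails here. $\mathcal P=\prod_i\mathcal P_i$ consists of stationary product policies on the augmented observation. An episode-level mixture instead draws one shared index at the start of the episode and keeps it, which correlates the agents throughout the episode. The paper itself notes that such mixtures need not be representable by decentralized or restricted policy classes. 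As proposed, you would therefore prove the proposition only under an added closure hypothesis or a strengthened oracle.

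Neither is needed, and your fallback sentence already contains the reason. At epoch $k$, apply the given oracle inequality to each component $\pi'^j\in\mathcal P$ separately, then average over $j$ with weights $1/M$. The left-hand side does not depend on $j$, so
\[
 J_\Sigma^\gamma(\pi^k)-\lambda_kg_k\geq\frac1M\sum_{j<M}J_\Sigma^\gamma(\pi'^j)-\lambda_k\bigl(\bar C_M(\Pi')-\varepsilon\bigr)-\delta\geq\frac1M\sum_{j<M}J_\Sigma^\gamma(\pi'^j)-\delta .
\]
This uses only $\lambda_k\geq0$ and average feasibility of $\Pi'$. It is exactly what the paper does, for arbitrary feasible finite mixtures $\mu$ of policies in $\mathcal P$, which also yields the stronger benchmark $J_{\rm mix}^\star\geq J_{\Sigma,M}^\star$. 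So your ``fallback'' is a consequence of the stated oracle condition, not an extra assumption. Mixture values over reset episodes are weighted averages by definition, so $\sup_\mu\mathcal L_\Sigma(\mu,\lambda)=\sup_{\pi\in\mathcal P}\mathcal L_\Sigma(\pi,\lambda)$ holds unconditionally, with no appeal to occupancy measures. With that replacement, the rest of your argument goes through unchanged.
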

\noindent The full proof is in Appendix~\ref{app:proof-cooperative}.

Proposition~\ref{prop:coop-bound} is the framework's cooperative performance guarantee, built on Proposition~\ref{prop:average-epoch-certificate}'s feasibility mechanism. In the cooperative commons, unconstrained team optimization produces a policy sequence meeting the average terminal-stock requirement up to the stated tolerance, with discounted harvest reward within the stated shortfall of the best average-feasible sequence. Abstention establishes that the budget is attainable; the reward bound measures how effectively the framework uses the resource. Neither guarantee requires convergence to one harvesting policy.

Proposition~\ref{prop:coop-bound} supplies the price control left open by Proposition~\ref{prop:average-epoch-certificate}. Joint abstention gives $C(\pi^\dagger)=0$, hence strict slack $\sigma=\varepsilon>0$. At sufficiently high prices, approximate team optimization must select a policy meeting the budget, keeping prices bounded. The feasibility tolerance is $O((\eta M)^{-1})$; the reward shortfall contains the solver error $\delta$, step-size term $\eta G^2/2$, and initialization term. With $\lambda_0=0$ and $\eta=M^{-1/2}$, both finite-horizon terms are $O(M^{-1/2})$. Constants worsen as $\sigma$ shrinks, and abstention provides no strict slack at $\varepsilon=0$. This builds on averaging in dual subgradient methods~\citep{larsson1999ergodic,nedic2009primal} and policy mixtures in constrained RL~\citep{le2019batch,miryoosefi2019convex}, with performance compared against feasible policy sequences.

Theorem~\ref{thm:sampled} in Appendix~\ref{app:proof} removes the need to observe exact expected costs in the cooperative update. Under conditional unbiasedness and the same cooperative oracle and strict-feasibility assumptions, $\lambda_M=O(\log M)$ almost surely, and both sampled and conditional-expected average depletion are asymptotically feasible. For fixed $\eta$ and confidence, its finite-sample cost bound includes an $O(M^{-1/2})$ sampling term. Its reward guarantee holds in expectation, with the same oracle and step-size errors.

Theorem~\ref{thm:nash} provides the framework's strategic performance guarantee. It bounds an individual harvester's gain from changing policies under Definition~\ref{def:epoch-nash}, a question distinct from resource protection or team reward. Its full proof is in Appendix~\ref{app:proof-nash}.

\begin{theorem}[Finite-sequence constrained Nash certificate]\label{thm:nash}
Suppose every $\pi^k$ is a $\delta$-Nash profile of the game with payoffs $\mathcal L_i(\cdot,\lambda_k)$, and use the expected-cost projected update with constant $\eta>0$. Set $G=\max\{\varepsilon,1-\varepsilon\}$. Put
\[
 \alpha_M=\left[\frac{\lambda_M-\lambda_0}{\eta M}\right]_+,
 \qquad
 \Delta_M=\inf_{\ell\geq0}\frac1M\sum_{k<M}|\lambda_k-\ell|.
\]
Then $\boldsymbol\Pi_M$ satisfies Definition~\ref{def:epoch-nash} with feasibility tolerance $\alpha_M$ and deviation tolerance
\begin{equation}\label{eq:nash-certificate}
 \delta_M=\delta+\frac{\eta G^2}{2}
             +\frac{\lambda_0^2}{2\eta M}+G\Delta_M.
\end{equation}
This is an a posteriori certificate. It does not assert that $\alpha_M$ or $\Delta_M$ vanishes.
\end{theorem}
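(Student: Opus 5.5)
The plan is to prove the two parts of Definition~\ref{def:epoch-nash} separately, with the same dual-averaging bookkeeping as in Proposition~\ref{prop:coop-bound}. For the feasibility tolerance, we apply Proposition~\ref{prop:average-epoch-certificate} with the expected costs $C(\boldsymbol\pi^k)\in[0,1]$ in place of $\widehat C_k$. The deterministic part of that proposition uses only the projected recursion, so it gives $\bar C_M-\varepsilon\leq(\lambda_M-\lambda_0)/(\eta M)\leq\alpha_M$ directly.

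For the deviation tolerance, fix an agent $i$ and an average-feasible deviation sequence $\boldsymbol\Sigma_i$ with $\sigma_i^k\in\mathcal P_i$. Write $g_k=C(\boldsymbol\pi^k)-\varepsilon$ and $d_k=C(\sigma_i^k,\boldsymbol\pi_{-i}^k)-\varepsilon$. Since costs lie in $[0,1]$, both satisfy $|g_k|,|d_k|\leq G$. Taking $z_i=\sigma_i^k$ in the $\delta$-Nash condition at price $\lambda_k$ gives
\[
J_i^\gamma(\sigma_i^k,\boldsymbol\pi_{-i}^k)-J_i^\gamma(\boldsymbol\pi^k)\leq\delta+\lambda_k d_k-\lambda_k g_k .
\]
Averaging over $k<M$ then reduces the claim to bounding two terms, $-\frac1M\sum_k\lambda_kg_k$ and $\frac1M\sum_k\lambda_kd_k$.

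For the first term, we use nonexpansiveness of $[\cdot]_+$ on $\lambda\geq0$. This gives $\lambda_{k+1}^2\leq(\lambda_k+\eta g_k)^2$, hence
\[
-\lambda_kg_k\leq\frac{\lambda_k^2-\lambda_{k+1}^2}{2\eta}+\frac{\eta G^2}{2}.
\]
Telescoping and dropping $-\lambda_M^2\leq0$ yields $-\frac1M\sum_k\lambda_kg_k\leq\frac{\lambda_0^2}{2\eta M}+\frac{\eta G^2}{2}$.

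The second term is the main obstacle. The deviation is feasible only on average, so individual $d_k$ may be positive exactly at epochs where $\lambda_k$ is large. Under a constant price, the term $\lambda\cdot\frac1M\sum_kd_k$ would be nonpositive. We therefore compare against an arbitrary constant $\ell\geq0$ and write $\lambda_k=\ell+(\lambda_k-\ell)$. This gives
\[
\frac1M\sum_k\lambda_kd_k=\ell\cdot\frac1M\sum_kd_k+\frac1M\sum_k(\lambda_k-\ell)d_k\leq0+\frac GM\sum_k|\lambda_k-\ell|,
\]
using average feasibility $\frac1M\sum_kd_k\leq0$ together with $\ell\geq0$ and $|d_k|\leq G$. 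Taking the infimum over $\ell$ gives $G\Delta_M$. Combining the three pieces gives the bound $\delta_M$ in~\eqref{eq:nash-certificate}, uniformly over $i$ and $\boldsymbol\Sigma_i$.

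Finally, we will remark that both $\alpha_M$ and $\Delta_M$ are computed from the realized prices, so the certificate is a posteriori. The argument needs no bound on $\lambda_k$, since no Slater-type condition is available for an individual agent.
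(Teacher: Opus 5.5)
Your proposal is correct and matches the paper's proof essentially step for step. Feasibility follows from the pathwise telescoping certificate of Proposition~\ref{prop:average-epoch-certificate} applied to the expected costs. The deviation bound combines the per-epoch $\delta$-Nash comparison, the dual-energy bound of Lemma~\ref{lem:energy} (which you reprove inline), and the constant reference-price decomposition that gives $G\Delta_M$ after taking the infimum over $\ell\geq0$.
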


For individually motivated harvesters, Theorem~\ref{thm:nash} connects approximate equilibria of unconstrained Lagrangian games to a constrained policy sequence: $\alpha_M$ bounds average budget excess, and $\delta_M$ bounds each agent's gain from any average-feasible deviation sequence against the opponents' fixed sequence. Unlike the team reward comparison in Proposition~\ref{prop:coop-bound}, this tests whether reallocating one's own depletion budget across epochs can improve individual harvest. The additional $G\Delta_M$ term accounts for spending more of that budget at high prices and compensating at low prices. For example, excess costs $(-a,+a)$ at prices $(0,L)$ have zero mean but price-weighted mean $La/2$. Thus bounded prices suffice for asymptotic feasibility, while persistent price oscillations can keep the strategic error bound positive. Price convergence to a finite limit gives $\Delta_M\to0$; at fixed $\eta$, the remaining bound is $\delta+\eta G^2/2$. The cooperative and strategic results thus address both collective reward and individual incentives under the resource budget.

The experiments use independent PPO for individual rewards and MAPPO for the team reward~\citep{schulman2017proximal,dewitt2020ippo,yu2022surprising}. After each episode, a shared controller adjusts the depletion price using both the current budget excess and an accumulated record of past excesses. This feedback guides subsequent policy updates. Several budget regimes exhibit depletion below or near their targets with approximate PPO updates, dense reward shaping, and capped proportional--integral (PI) feedback, showing useful resource regulation beyond the precise implementation analyzed in the theory. Algorithms~\ref{alg:ippo} and~\ref{alg:mappo} in Appendix~\ref{app:algorithms} describe the learners; Appendix~\ref{app:protocol} gives the controller and training settings.

\section{Experiments}\label{sec:experiments}

We evaluate constrained IPPO and MAPPO at depletion budgets $\varepsilon\in\{0.01,0.05,0.1,0.3,0.4,0.6\}$, with \editorialflag{three seeds per method--budget pair.} Terminal depletion measures the resource requirement, harvest return measures its reward consequences, and the price records feedback used to regulate depletion. Actors and prices change across training epochs, generating a sequence of harvesting policies.

The three experimental figures show depletion, unpenalized return, and the shared price over the final 2,000 epochs. Each curve reports a seed mean and its envelope spans the seed minimum--maximum. These training measurements describe late-window tracking and adaptation, not held-out evaluation of a final policy or the full-sequence quantity $\bar C_M$ in our solution concept.

Each run comprises 20,000 training epochs. Protocol details are in Appendix~\ref{app:protocol}.

\subsection{Constraint tracking is budget- and window-dependent}\label{sec:constrained-solution}

Figure~\ref{fig:convergence} examines whether the depletion budget provides practical control over resource use. Smaller budgets demand greater terminal-stock preservation, and the learned depletion trajectories reflect these different requirements in both the individual-reward and team-reward settings. The smallest budget produces strong stock preservation; larger budgets permit greater depletion, with intermediate budgets showing more variation across seeds and epochs. The reference lines mark the allowed depletion $\varepsilon$. Curves below a reference line meet that budget in the displayed measurement, while crossings identify excess depletion. The seed envelopes show that a mean near the target can coexist with individual seeds above it.

\begin{figure}[t]
\centering\includegraphics[width=\linewidth]{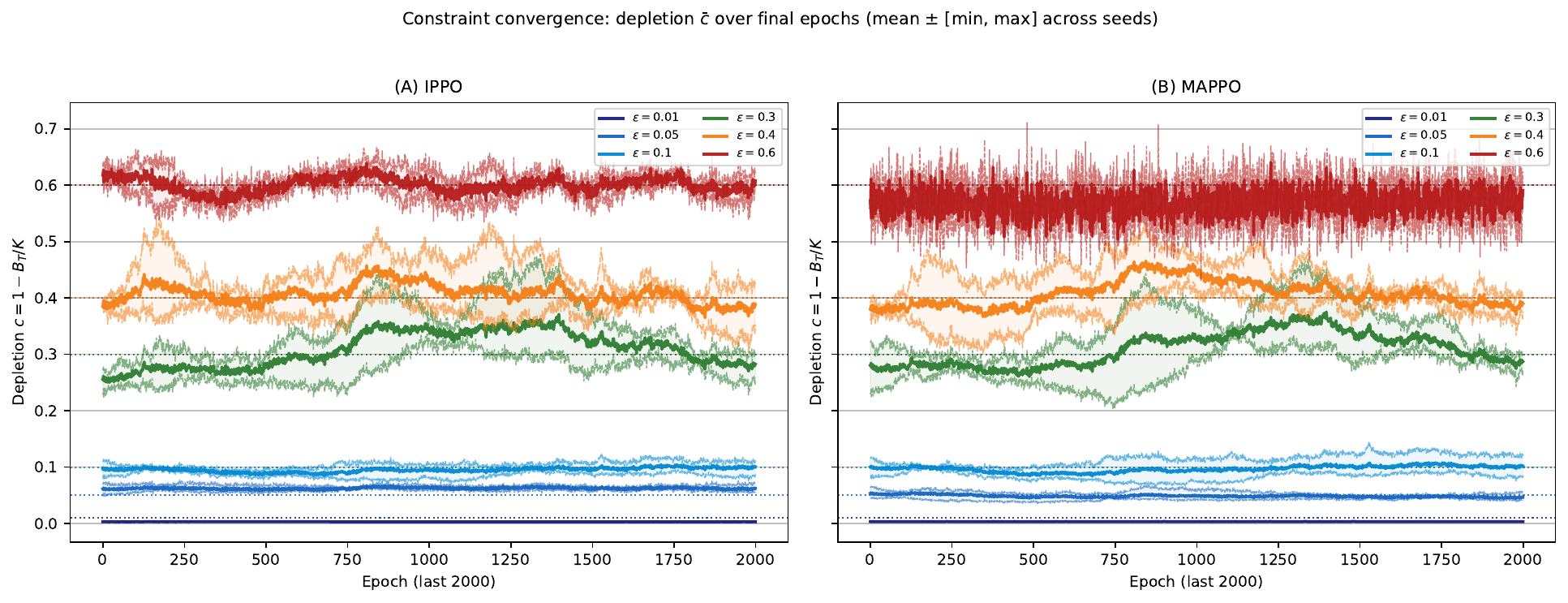}
\caption{Terminal depletion: horizontal axis is the epoch offset within the final 2,000 epochs; vertical axis is $c=1-B_T/K$ ($T=H$). Panels (A)/(B) show IPPO/MAPPO. Colors denote budgets $\varepsilon$; dotted horizontal lines mark their targets. Curves show seed means. Envelopes span the seed minimum--maximum. See Appendix~\ref{app:plot-reading} for notation.}
\label{fig:convergence}
\end{figure}

\subsection{The reward--sustainability tradeoff}\label{sec:reward}

The resource constraint deliberately changes the optimization problem. Figure~\ref{fig:reward} shows the harvest return associated with different depletion budgets over the same displayed late-training horizon. The tightest budget is accompanied by low return; larger budgets generally show higher returns, with little separation between MAPPO at budgets $.4$ and $.6$. The budget specifies how much terminal stock can be sacrificed in pursuit of harvest.

Harvest return and depletion should be read together: each budget defines a different resource requirement. The theoretical reward benchmark compares sequences satisfying the same budget; the experimental curves show the harvest associated with each chosen budget.

\begin{figure}[t]
\centering\includegraphics[width=\linewidth]{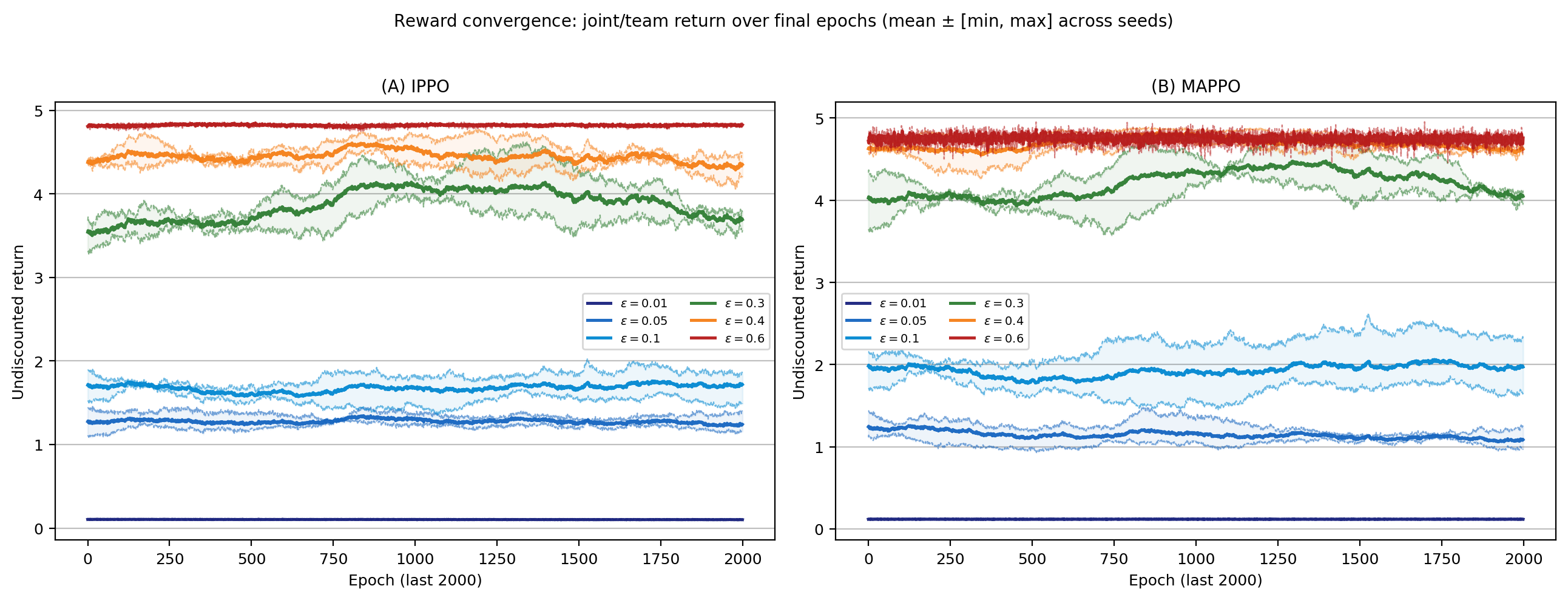}
\caption{Harvest return: horizontal axis is the epoch offset within the final 2,000 epochs; vertical axis is undiscounted joint/team return (the trainer logs total harvest divided by $K$). Panels (A)/(B) show IPPO/MAPPO; colors denote $\varepsilon$. Curves show seed means. Envelopes span their minimum--maximum. This is not the discounted objective $J_\Sigma^\gamma$; see Appendix~\ref{app:plot-reading}.}
\label{fig:reward}
\end{figure}

In Fig.~\ref{fig:lambda-main}, tight budgets sustain positive prices, while MAPPO's price is near zero at $\varepsilon=0.6$, consistent with little sustained price pressure being needed in that displayed loose-budget regime. Price magnitude also depends on reward scale, penalty scale, update interval, and PI gains, so it cannot directly measure reward foregone.

\begin{figure}[t]
\centering\includegraphics[width=\linewidth]{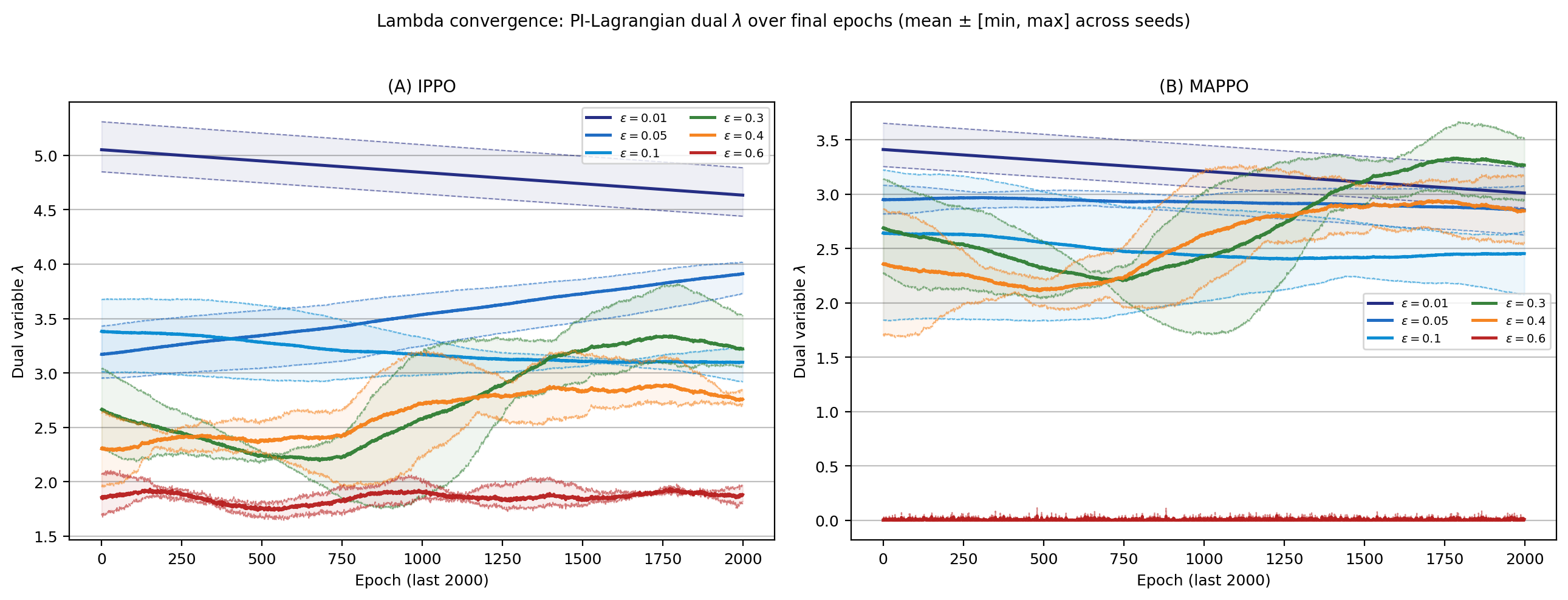}
\caption{Ecological price: horizontal axis is the epoch offset within the final 2,000 epochs; vertical axis is the shared PI multiplier $\lambda$. Panels (A)/(B) show IPPO/MAPPO; colors denote $\varepsilon$. Curves show seed means and envelopes span seed ranges. This $\lambda$ is distinct from $\lambda_{\rm GAE}$.}\label{fig:lambda-main}
\end{figure}

The price trajectory records feedback between budget excess and learning. Positive excess depletion increases the integral state until its cap and contributes a positive proportional term; slack has the opposite effect, subject to projection. The total price need not increase after every violating epoch, since the proportional term also changes from the previous epoch. Actors then adapt to the changed penalty, which can alter subsequent depletion. This delayed interaction can generate oscillation even when the budget is fixed~\citep{stooke2020responsive}.

Figure~\ref{fig:lambda-main} shows continued adaptation within the displayed window. IPPO retains its largest price at the tightest budget. MAPPO at budget $.3$ shows a price that first falls and then rises, while its price at $.6$ stays near zero. Learned prices need not decrease monotonically with the budget at each epoch because each budget induces a different policy and controller history. Persistent price variation is compatible with changing policies; under the game certificate it also enters the deviation bound through price dispersion.

\subsection{Empirical scope and implementation limits}\label{sec:empirical-limits}

The evaluation examines how the nonstationary Lagrangian framework regulates resource use in the constrained commons. Establishing how frequently unconstrained MARL leads to depletion would require a separate statistical study across learning algorithms, environments, and systematically varied---including adversarially selected---parameters. Here we examine budget tracking, harvest reward, and price adaptation with practical learners that depart from the exact optimization and multiplier updates assumed by the theory.

The evaluation uses stochastic policies in one two-agent environment with deterministic stock dynamics. PPO performs updates on shaped rewards, and its dense one-sided reward shaping and capped PI feedback differ from the exact terminal-cost Lagrangian and uncapped projected update analyzed in the theory. Several displayed regimes exhibit near-target depletion with this implementation. Excursions rule out per-epoch feasibility in the displayed window; they neither establish nor refute full-sequence average feasibility. The experiments do not measure oracle error, feasible-deviation gains, or cooperative optimality gaps.

Joint abstention verifies cooperative strict feasibility when included in the policy class. The uniform unilateral Slater condition in the prior CMG result fails in this fishery: one agent cannot offset maximal extraction by its opponent (Appendix~\ref{app:environment}). This explains why observed stock regulation is assessed separately from a Nash claim. 

The ecological measurement is terminal depletion. It does not control within-episode stock minima or ensure safety in every episode. Reset-episode averaging is appropriate when resource requirements may be balanced across episodes; irreversible-collapse applications instead require control of the relevant trajectory event, potentially with a safety filter~\citep{wabersich2021filter}.

\section{Conclusion}\label{sec:conclusion}

The tragedy of the commons motivates a learning problem in which resource preservation is an explicit requirement. We formulate this requirement through a depletion budget and develop an average-epoch solution concept for nonstationary policy sequences. Our Lagrangian framework connects solutions of unconstrained games and team problems to feasibility, cooperative reward, and constrained Nash guarantees under stated assumptions. The fishery experiments illustrate how the budget shapes terminal stock, harvest return, and price adaptation, through the generated policy sequences. The resulting approach gives resource protection a specified quantitative target and provides a mathematical basis for evaluating the policies generated as agents learn to use the commons.

\paragraph{Reproducibility statement.} Appendix~\ref{app:protocol} specifies the model, \editorialflag{training protocol}, and plot conventions; Appendix~\ref{app:diagnostics} discusses the interpretation of the experimental curves.

\paragraph{AI-use statement.} An AI assistant was used to help develop mathematical arguments, interpret experimental results, identify references, and edit the manuscript and LaTeX. The authors are responsible for all scientific statements.

\bibliographystyle{plainnat}
\bibliography{references-main-4}
\clearpage
\appendix
\setcounter{figure}{0}\setcounter{table}{0}
\renewcommand{\thefigure}{A\arabic{figure}}
\renewcommand{\thetable}{A\arabic{table}}
\setcounter{algorithm}{0}
\renewcommand{\thealgorithm}{A\arabic{algorithm}}
\providecommand{\theHalgorithm}{}
\renewcommand{\theHalgorithm}{appendix.\arabic{algorithm}}
\renewcommand{\theHfigure}{appendix.\arabic{figure}}
\renewcommand{\theHtable}{appendix.\arabic{table}}

\section{Relation to the average-reward Lagrangian results}\label{app:source-results}

The epoch-dependent construction in this paper follows the architecture of the constrained-Markov-game result of~\citet{das2025cmg} and the cooperative CM-MDP result of~\citet{das2025lagrangian}. Both source results concern a continuously evolving, infinite-horizon system evaluated by long-run \emph{time-average} reward and constraint values. Our fishery experiment instead resets the stock between finite episodes, optimizes a discounted episodic reward, and measures a terminal cost. These differences leave the algebra of the dual update intact but change how policies are evaluated, which alternatives they are compared with, and which problems the solvers must solve.

\subsection{Original constrained Markov-game result}

In the original CMG formulation, each agent $i$ has a long-run average reward $V_i^s(\pi)$ from initial state $s$, and the common or coupled constraint values are $U_j^s(\pi)\geq b_j$. A nonstationary policy changes with physical time; an epoch-dependent policy holds a stationary profile $\pi^k$ for $T_0$ time steps before switching. The Lagrangian game at multiplier vector $\lambda_k\geq0$ gives agent $i$ payoff
\begin{equation}\label{eq:source-cmg}
\mathcal L_i^s(\pi,\lambda_k)=V_i^s(\pi)+\lambda_k^\top[U^s(\pi)-b].
\end{equation}
The oracle must return a stationary $\delta$-Nash profile of \emph{each} encountered Lagrangian game. Rollout constraint values then update the multiplier by projected dual descent. The sign in~\eqref{eq:source-cmg} reflects a lower-bound constraint; our upper-bound terminal depletion in~\eqref{eq:lagrangians} uses the equivalent negative sign.

The original theorem assumes (i) a stationary approximate-Nash oracle; (ii) a \emph{uniform unilateral Slater condition}: against any stationary policies of the other players, each player can unilaterally produce strictly feasible constraint values; (iii) bounded reward and constraint functions; and (iv) unbiased epoch rollouts for the associated average-reward values. Under these assumptions it states almost-sure long-run feasibility and an approximate nonstationary Nash guarantee, with a gap of order $\eta B^2/2+\delta$ in its notation~\citep{das2025cmg}. The proof uses player-specific generalized dual functions to control the multiplier sequence, because a general-sum game has no single team dual objective.

More explicitly, the average-reward CMG oracle returns $\pi^k$ such that, for every agent $i$ and stationary unilateral alternative $z_i$,
\begin{equation}\label{eq:source-game-oracle}
\mathcal L_i^s(\pi^k,\lambda_k)+\delta\geq
\mathcal L_i^s((z_i,\pi_{-i}^k),\lambda_k).
\end{equation}
The original analysis controls multiplier growth through the family of unilateral generalized duals
\begin{equation}\label{eq:source-generalized-dual}
d_i^s(\lambda,\pi_{-i})=
\sup_{z_i}\mathcal L_i^s((z_i,\pi_{-i}),\lambda).
\end{equation}
The arguments of~\eqref{eq:source-generalized-dual} include the opponents' current policies, so even the dual-control step is not the minimization of one scalar team function. \editorialflag{The game formulation evaluates individual incentives, whereas the cooperative formulation evaluates a joint objective.}

Two restrictions matter here. First, the uniform unilateral Slater condition need not hold for a shared renewable stock: if one harvester extracts maximally, the other cannot restore biomass by abstaining. Second, a result for average \emph{time-step} reward and feasible deviations cannot, without further argument, establish Definition~\ref{def:epoch-nash}, whose deviations may reallocate a finite constraint budget among reset episodes. The solution concept in Definition~\ref{def:epoch-nash} is an episodic analogue motivated by the original work, not a restatement of its theorem.

\subsection{Original safe cooperative-MARL result}

The cooperative CM-MDP of~\citet{das2025lagrangian} has one joint average-reward value $V^s(\pi)$ and system constraints $U_j^s(\pi)\geq b_j$. At multiplier $\lambda_k$, a learning oracle returns a stationary $\delta$-optimal policy for the relaxed joint MDP with objective $V^s(\pi)+\lambda_k^\top[U^s(\pi)-b]$. The agents execute this stationary policy over an epoch, update the dual from sampled constraint values, and concatenate the sequence to obtain an epoch-dependent nonstationary policy. The theorem assumes a joint strictly feasible stationary policy, bounded values, an approximately optimal oracle, and unbiased rollouts. Under those assumptions, the executed infinite-horizon sequence is almost surely feasible and its joint average reward is within the reported $\eta B^2/2+\delta$ tolerance of the paper's stationary feasible benchmark. The benchmark is not a Nash condition: cooperative agents maximize one joint value.

The cooperative source oracle condition is $\mathcal L^s(\pi^k,\lambda_k)+\delta\geq\sup_{\pi\in\mathcal P}\mathcal L^s(\pi,\lambda_k)$, where $\mathcal L^s$ is the joint average-reward Lagrangian. Its performance is compared with the best \emph{stationary feasible} joint policy, even though the returned policy is epoch-dependent and nonstationary. The cooperative paper's proof combines multiplier tightness, feasibility, and a reward lower bound; none of these three conclusions alone implies the other two. For reset episodes, Proposition~\ref{prop:coop-bound} proves feasibility and a reward bound against every feasible mixture; Theorem~\ref{thm:sampled} treats unbiased sampled costs. These are separate results proved here. We do not establish comparable bounds on solution error or optimality for PPO.

Compared with the CMG case, joint strict feasibility is plausible in the fishery because both agents can refrain from extraction. The implemented policy update uses a shaped one-sided biomass-loss signal; the dual uses terminal depletion; and its proportional--integral recursion has a capped state. \editorialflag{The shaping discrepancy contributes to the oracle error for the terminal-cost objective, as quantified in Appendix~\ref{app:mismatch}.}

\section{Complete proofs and the average-epoch equilibrium question}\label{app:proof}\label{app:theory}

Throughout this appendix, $0<\varepsilon<1$, $g(\pi)=C(\pi)-\varepsilon$, and $G=\max\{\varepsilon,1-\varepsilon\}$. Since costs lie in $[0,1]$, both $g(\pi)$ and $\widehat g_k:=\widehat C_k-\varepsilon$ belong to $[-\varepsilon,1-\varepsilon]$, and their absolute values are at most $G$. We write $g_k=g(\pi^k)$ and $\bar C_M=M^{-1}\sum_{k<M}C(\pi^k)$. The initialization $\lambda_0\geq0$ is finite, and the step size $\eta>0$ is constant throughout each run. Projection is onto $[0,\infty)$, with no upper cap.

For stochastic statements, work on a probability space with an increasing filtration $(\mathcal F_k)_{k\geq0}$. The policy $\pi^k$ and price $\lambda_k$ are $\mathcal F_k$-measurable, while the evaluation cost $\widehat C_k$ is $\mathcal F_{k+1}$-measurable. Thus $\mathcal F_k$ contains the preceding rollout outcomes and any randomization used to select the current policy, but not its forthcoming evaluation outcome. Conditional unbiasedness means $\mathbb E[\widehat C_k\mid\mathcal F_k]=C(\pi^k)$; it does not require independence across epochs. Any almost-sure conclusion that assumes $\lambda_M/M\to0$ requires that growth condition to hold almost surely. Countably many almost-sure oracle conditions, one for each epoch, can be imposed on a single probability-one event.

All policies and comparators belong to the specified policy class $\mathcal P$, and the cooperative strictly feasible policy must belong to that same class. Bounded rewards ensure that the value comparisons below are finite. The constant $R$ bounds the range of the relevant reward value, rather than the magnitude of a single sampled reward. The oracle tolerance $\delta\geq0$ is a uniform error in the value of the exact Lagrangian; it is not a PPO clipping parameter, gradient norm, or empirical training loss. We retain explicit intermediate inequalities to distinguish the algebraic certificate, multiplier control, reward comparison, and strategic deviation test.

\subsection{Full proof of Proposition~\ref{prop:average-epoch-certificate}}\label{app:proof-feasibility}
\begin{proof}
We first fix an arbitrary realization of the costs. For every real number $x$, $[x]_+=\max\{x,0\}\geq x$. Applying this inequality to the multiplier update gives
\[
 \lambda_{k+1}=[\lambda_k+\eta\widehat g_k]_+
 \geq\lambda_k+\eta\widehat g_k,
 \qquad
 \eta(\widehat C_k-\varepsilon)\leq\lambda_{k+1}-\lambda_k.
\]
Summing over $k=0,\ldots,M-1$ cancels every intermediate multiplier:
\[
 \eta\sum_{k=0}^{M-1}(\widehat C_k-\varepsilon)
 \leq\sum_{k=0}^{M-1}(\lambda_{k+1}-\lambda_k)
 =\lambda_M-\lambda_0.
\]
Division by $\eta M>0$ proves~\eqref{eq:certificate}. This calculation is pathwise and uses no reward or oracle assumption. If $\lambda_M/M\to0$ on the chosen realization, then $(\lambda_M-\lambda_0)/(\eta M)\to0$, because $\lambda_0$ and $\eta$ are fixed. Taking a limit superior therefore gives realized average feasibility on that realization. In particular, an almost-sure sublinear-growth assumption yields an almost-sure realized-cost conclusion.

We now relate the observed costs to the values of the selected policies. Set
\[
 D_k=\widehat C_k-C(\pi^k),\qquad S_M=\sum_{k=0}^{M-1}D_k,\qquad S_0=0.
\]
Conditional unbiasedness gives $\mathbb E[D_k\mid\mathcal F_k]=0$. Moreover, $D_k\in[-1,1]$, since both costs lie in $[0,1]$. The variables $D_k$ are thus bounded martingale differences: they need not be independent, but their conditional means vanish before each evaluation. The conditional form of Hoeffding's lemma (Lemma~\ref{lem:conditional-hoeffding}, Appendix~\ref{app:probability-tools}), applied to a centered variable in an interval of length two, gives
\[
 \mathbb E[e^{tD_k}\mid\mathcal F_k]\leq e^{t^2/2},\qquad t\in\mathbb R.
\]
The bound is conservative but sufficient. Since $S_{M-1}$ is $\mathcal F_{M-1}$-measurable, the tower property (Lemma~\ref{lem:conditional-expectation}, Appendix~\ref{app:probability-tools}) yields
\begin{align*}
 \mathbb E e^{tS_M}
 &=\mathbb E\!\left[e^{tS_{M-1}}
       \mathbb E[e^{tD_{M-1}}\mid\mathcal F_{M-1}]\right]\\
 &\leq e^{t^2/2}\mathbb E e^{tS_{M-1}}
 \leq\cdots\leq e^{Mt^2/2}.
\end{align*}
For $a>0$ and $t>0$, Markov's inequality (Lemma~\ref{lem:markov}, Appendix~\ref{app:probability-tools}) consequently gives
\[
 \Pr(S_M\geq Ma)\leq e^{-tMa}\mathbb E e^{tS_M}
 \leq\exp\{-tMa+Mt^2/2\}.
\]
The exponent is minimized at $t=a$, so this probability is at most $e^{-Ma^2/2}$. Applying the same calculation to $-S_M$ and adding the two tail probabilities proves
\begin{equation}\label{eq:martingale-tail}
 \Pr\!\left(\left|\sum_{k<M}D_k\right|\geq Ma\right)
 \leq2e^{-Ma^2/2},\qquad a>0.
\end{equation}
For each fixed positive integer $j$, the sum over $M$ of the right-hand side with $a=1/j$ is finite. The first Borel--Cantelli lemma (Lemma~\ref{lem:borel-cantelli}, Appendix~\ref{app:probability-tools}) implies that $|S_M|/M\geq1/j$ occurs only finitely often, almost surely. Taking the intersection of these probability-one events, using Lemma~\ref{lem:union-bound} in Appendix~\ref{app:probability-tools}, over the countable set $j=1,2,\ldots$ proves $S_M/M\to0$ almost surely. No independence of the tail events is needed.

By the definition of $S_M$,
\[
 \bar C_M-\varepsilon
 =\frac1M\sum_{k<M}\widehat C_k-\varepsilon-\frac{S_M}{M}.
\]
On the event where both $\lambda_M/M\to0$ and $S_M/M\to0$, the realized-cost certificate therefore implies $\limsup_M\bar C_M\leq\varepsilon$. This proves the conditional-expected assertion under almost-sure sublinear multiplier growth.

For a fixed horizon $M$ and $\beta\in(0,1)$, the lower-tail calculation above gives
\[
 \Pr\!\left(-\frac{S_M}{M}>\sqrt{\frac{2\log(1/\beta)}{M}}\right)\leq\beta.
\]
Substituting this bound into the preceding identity and using the pathwise certificate proves, with probability at least $1-\beta$,
\begin{equation}\label{eq:finite-sample-feas}
 \bar C_M-\varepsilon\leq
 \frac{\lambda_M-\lambda_0}{\eta M}
 +\sqrt{\frac{2\log(1/\beta)}{M}}.
\end{equation}
Unlike the asymptotic conclusion, this fixed-horizon inequality does not assume sublinear multiplier growth.

Finally, suppose the evaluations have conditional bias
$\mathbb E[\widehat C_k\mid\mathcal F_k]=C(\pi^k)+b_k$ with $|b_k|\leq b_k^{\max}$. Center instead at the actual conditional mean, setting $\widetilde D_k=\widehat C_k-\mathbb E[\widehat C_k\mid\mathcal F_k]$. This variable is still conditionally centered and in $[-1,1]$, since a conditional mean of a $[0,1]$-valued variable lies in $[0,1]$. Now
\[
 \bar C_M=\frac1M\sum_{k<M}\widehat C_k
 -\frac1M\sum_{k<M}\widetilde D_k-\frac1M\sum_{k<M}b_k.
\]
The same concentration argument applies to $\widetilde D_k$, while $-\sum_k b_k\leq\sum_k b_k^{\max}$. Thus the right-hand side of~\eqref{eq:finite-sample-feas} acquires the additional term $M^{-1}\sum_{k<M}b_k^{\max}$. Vanishing average bias preserves the asymptotic conclusion; a persistent bias need not.
\end{proof}

The proposition is an accounting identity plus a sampling argument. It does not show that the price grows sublinearly. For example, $\widehat C_k=1$ gives $\lambda_M=\lambda_0+\eta M(1-\varepsilon)$, for which the bound is exact and infeasibility persists. This is why Proposition~\ref{prop:coop-bound} is needed. The full induction argument and its reward and sampling extensions follow here.

\subsection{A dual-energy lemma}
\begin{lemma}[Projected-dual energy]\label{lem:energy}
For $\lambda_{k+1}=[\lambda_k+\eta v_k]_+$, $\lambda_k\geq0$, and $|v_k|\leq G$,
\begin{equation}\label{eq:energy}
 -\frac1M\sum_{k<M}\lambda_kv_k
 \leq \frac{\lambda_0^2-\lambda_M^2}{2\eta M}
       +\frac{\eta}{2M}\sum_{k<M}v_k^2
 \leq \frac{\lambda_0^2}{2\eta M}+\frac{\eta G^2}{2}.
\end{equation}
\end{lemma}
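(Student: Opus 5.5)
The plan is the standard squared-distance argument for projected subgradient steps, applied to the one-dimensional map $x\mapsto[x]_+$ with comparison point $0$. Projection onto $[0,\infty)$ is nonexpansive and fixes $0$, so $|[x]_+|\leq|x|$ for every real $x$. This yields
\[
 \lambda_{k+1}^2\leq(\lambda_k+\eta v_k)^2=\lambda_k^2+2\eta\lambda_kv_k+\eta^2v_k^2 .
\]
The inequality can also be checked by cases. If $\lambda_k+\eta v_k\geq0$, it holds with equality. Otherwise $\lambda_{k+1}=0$ and the right-hand side is a square, hence nonnegative.

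Rearranging gives the per-step bound
\[
 -\lambda_kv_k\leq\frac{\lambda_k^2-\lambda_{k+1}^2}{2\eta}+\frac{\eta}{2}v_k^2 .
\]
Summing over $k=0,\ldots,M-1$ telescopes the squared multipliers to $(\lambda_0^2-\lambda_M^2)/(2\eta)$. Dividing by $M$ then gives the first inequality in~\eqref{eq:energy}.

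For the second inequality, drop $-\lambda_M^2\leq 0$ and use $v_k^2\leq G^2$. The averaged sum $\frac{\eta}{2M}\sum_{k<M}v_k^2$ is then at most $\eta G^2/2$.

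No step is a real obstacle. The only point needing care is the projection inequality $\lambda_{k+1}^2\leq(\lambda_k+\eta v_k)^2$, which relies on $0$ lying in the feasible set $[0,\infty)$. This is also why an upper cap on the multiplier, excluded at the start of the appendix, would not change this particular lemma but would break Proposition~\ref{prop:average-epoch-certificate}. The hypothesis $\lambda_k\geq0$ is automatic for $k\geq1$ and is assumed for $\lambda_0$. The lemma makes no assumption on how $v_k$ is generated, so it applies pathwise to both expected and sampled costs.
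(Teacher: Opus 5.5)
Your proposal is correct and matches the paper's proof step for step. Both use the bound $[x]_+^2\le x^2$, verified by the same two cases. Both then expand, rearrange, telescope $\lambda_k^2$, drop $-\lambda_M^2$, and bound $v_k^2\le G^2$. Your side remarks are also accurate: the argument is pathwise, and a cap $[0,\Lambda]$ would leave this lemma intact but break the certificate's use of $[x]_+\ge x$.
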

\begin{proof}
The squared multiplier provides a telescoping quantity that controls the price-weighted violations. For every real $x$, $[x]_+^2\leq x^2$: equality holds for $x\geq0$, and the left-hand side is zero for $x<0$. Therefore
\[
 \lambda_{k+1}^2\leq(\lambda_k+\eta v_k)^2
 =\lambda_k^2+2\eta\lambda_kv_k+\eta^2v_k^2.
\]
Moving the product term to the left and dividing by $2\eta>0$ gives
\[
 -\lambda_kv_k\leq
 \frac{\lambda_k^2-\lambda_{k+1}^2}{2\eta}
 +\frac{\eta v_k^2}{2}.
\]
We now sum over $k<M$ and divide by $M$. The differences of consecutive squared multipliers telescope, yielding
\[
 -\frac1M\sum_{k<M}\lambda_kv_k
 \leq\frac{\lambda_0^2-\lambda_M^2}{2\eta M}
 +\frac{\eta}{2M}\sum_{k<M}v_k^2.
\]
Since $\lambda_M^2\geq0$, removing its negative contribution can only increase the right-hand side. Since $|v_k|\leq G$, the remaining average of squares is at most $G^2$. These observations give the second inequality in~\eqref{eq:energy}. The entire argument is pathwise, so the lemma applies to expected violations or sampled violations, provided they are the quantities actually used in the projected update.
\end{proof}

\subsection{Full proof of Proposition~\ref{prop:coop-bound}}\label{app:proof-cooperative}
Let $J=J_\Sigma^\gamma$. A finite mixture $\mu$ selects one entire joint stationary policy at the beginning of an episode and executes it throughout that episode. Define $J(\mu)=\mathbb E_{\pi\sim\mu}J(\pi)$ and $C(\mu)=\mathbb E_{\pi\sim\mu}C(\pi)$, and let
\begin{equation}\label{eq:mixture-opt}
 J_{\rm mix}^{\star}=\sup_{\mu:\,C(\mu)\leq\varepsilon}J(\mu),
\end{equation}
where the supremum is over finite mixtures of policies in $\mathcal P$. Every length-$M$ sequence induces an equal-weight mixture, so $J_{\Sigma,M}^{\star}\leq J_{\rm mix}^{\star}$. No existence of a maximizing mixture is needed for the bounds below.

\begin{proof}
We first use strict feasibility to control the multipliers. Let $\pi^\dagger\in\mathcal P$ satisfy $g(\pi^\dagger)\leq-\sigma$. The $\delta$-optimality assumption implies comparison with every policy in $\mathcal P$, in particular
\[
 J(\pi^k)-\lambda_kg_k
 \geq J(\pi^\dagger)-\lambda_kg(\pi^\dagger)-\delta
 \geq J(\pi^\dagger)+\lambda_k\sigma-\delta.
\]
The second inequality uses both $\lambda_k\geq0$ and the strictly negative violation of $\pi^\dagger$. Rearranging and using $J(\pi^k)-J(\pi^\dagger)\leq R$ gives
\begin{equation}\label{eq:oracle-drift}
 \lambda_kg_k
 \leq J(\pi^k)-J(\pi^\dagger)+\delta-\lambda_k\sigma
 \leq R+\delta-\lambda_k\sigma.
\end{equation}
Thus a sufficiently high price makes the strictly feasible comparator competitive with every possible reward advantage of a violating policy.

To make this observation quantitative, put $L=(R+\delta)/\sigma$ and $B=\max\{\lambda_0,L+\eta G\}$. Suppose first that $\lambda_k\geq L$ and $\lambda_k>0$. The right-hand side of~\eqref{eq:oracle-drift} is nonpositive; division by $\lambda_k$ gives $g_k\leq0$. Hence both zero and $\lambda_k+\eta g_k$ are at most $\lambda_k$, and
\[
 \lambda_{k+1}=\max\{0,\lambda_k+\eta g_k\}\leq\lambda_k.
\]
If instead $\lambda_k<L$, then $g_k\leq G$ and $\lambda_k+\eta G\geq0$, so
\[
 \lambda_{k+1}\leq\lambda_k+\eta G<L+\eta G\leq B.
\]
The only case not covered by these alternatives is $L=\lambda_k=0$, for which the same bound gives $\lambda_{k+1}\leq\eta G\leq B$. Starting with $0\leq\lambda_0\leq B$, induction now proves $0\leq\lambda_k\leq B$ for every $k$: above the threshold the multiplier cannot increase, and below it a single update cannot overshoot by more than $\eta G$.

We now invoke Proposition~\ref{prop:average-epoch-certificate} with the costs $C(\pi^k)$ actually used in this expected-cost update. Its deterministic telescoping inequality yields
\[
 \bar C_M-\varepsilon
 \leq\frac{\lambda_M-\lambda_0}{\eta M}
 \leq\frac{B-\lambda_0}{\eta M},
\]
which is the claimed feasibility bound.

For the reward comparison, fix any feasible finite mixture $\mu=\sum_{j=1}^m p_j\delta_{\pi_j}$, where $p_j\geq0$, $\sum_jp_j=1$, $\pi_j\in\mathcal P$, and $C(\mu)\leq\varepsilon$. Here $\delta_{\pi_j}$ denotes the point mass at $\pi_j$, not the oracle tolerance. At epoch $k$, the oracle inequality holds against each support policy:
\[
 J(\pi^k)-\lambda_kg_k\geq J(\pi_j)-\lambda_kg(\pi_j)-\delta.
\]
Multiply by $p_j$ and sum over $j$. Linearity of mixture values, together with $\sum_jp_j=1$, gives
\begin{align*}
 J(\pi^k)-\lambda_kg_k
 &\geq\sum_jp_jJ(\pi_j)-\lambda_k\sum_jp_jg(\pi_j)-\delta\\
 &=J(\mu)-\lambda_k[C(\mu)-\varepsilon]-\delta\\
 &\geq J(\mu)-\delta.
\end{align*}
The last inequality uses feasibility of the mixture and $\lambda_k\geq0$. It does not require each support policy to be feasible. Nor must the oracle itself optimize over mixtures: averaging its comparisons with individual policies is enough.

Rearranging and averaging over epochs gives
\[
 \bar J_{\Sigma,M}\geq J(\mu)-\delta
 +\frac1M\sum_{k<M}\lambda_kg_k.
\]
The update uses $g_k$, so Lemma~\ref{lem:energy} applies with $v_k=g_k$ and gives
\[
 \frac1M\sum_{k<M}\lambda_kg_k
 \geq-\frac{\lambda_0^2}{2\eta M}-\frac{\eta G^2}{2}.
\]
Combining these two inequalities bounds reward below by $J(\mu)$ minus the stated error, uniformly over every feasible finite mixture. Strict feasibility makes this comparator set nonempty, and bounded rewards make its supremum finite. For any $a>0$, the definition of supremum supplies a feasible mixture with reward greater than $J_{\rm mix}^{\star}-a$. Applying the uniform bound and letting $a\downarrow0$ proves
\begin{equation}\label{eq:coop-reward-full}
 \bar J_{\Sigma,M}\geq J_{\rm mix}^{\star}
 -\delta-\frac{\eta G^2}{2}-\frac{\lambda_0^2}{2\eta M}
 \geq J_{\Sigma,M}^{\star}
 -\delta-\frac{\eta G^2}{2}-\frac{\lambda_0^2}{2\eta M}.
\end{equation}
The final inequality holds because any feasible length-$M$ sequence induces a feasible mixture assigning mass $1/M$ to each of its components. This proves the stated comparison without assuming an optimal mixture is attained. The output has a finite-horizon feasibility tolerance, so its reward can exceed the exactly feasible optimum; the conclusion is a lower performance bound together with approximate feasibility.
\end{proof}

For a prescribed horizon $M$, using a constant $\eta=M^{-1/2}$ throughout that run and $\lambda_0=0$ gives feasibility tolerance at most $L/\sqrt M+G/M$ and reward tolerance $\delta+G^2/(2\sqrt M)$. This is a family of finite-horizon guarantees; it is not a proof for replacing $\eta$ by $k^{-1/2}$ within a single run. At fixed $\eta$, average feasibility is asymptotically exact and the reward tolerance retains $\delta+\eta G^2/2$.

\subsection{Sampled-cost cooperative guarantee}\label{app:proof-sampled}
The deterministic bound on \emph{every} multiplier cannot be asserted for random costs, even with an ideal oracle. The following replacement handles the reset-episode sampling used in Proposition~\ref{prop:average-epoch-certificate}.

\begin{theorem}[Cooperative oracle with unbiased sampled costs]\label{thm:sampled}
Assume the reward-range, strict-feasibility, and oracle conditions of Proposition~\ref{prop:coop-bound} hold almost surely. Use $\lambda_{k+1}=[\lambda_k+\eta(\widehat C_k-\varepsilon)]_+$ with deterministic $\lambda_0\geq0$, constant $\eta>0$, and $\mathbb E[\widehat C_k\mid\mathcal F_k]=C(\pi^k)$. Define
\begin{gather}\label{eq:drift-constants}
 D=\max\left\{\frac{2(R+\delta)}{\sigma},\eta G\right\},\qquad
 \theta=\frac{\sigma}{2\eta G^2},\qquad
 \rho=\exp\left(-\frac{\sigma^2}{8G^2}\right)<1,\\
 A=e^{\theta\lambda_0}+\frac{e^{\theta(D+\eta G)}}{1-\rho}.
\end{gather}
Then $\sup_k\mathbb E e^{\theta\lambda_k}\leq A$, $\lambda_M=O(\log M)$ almost surely, and both realized and conditional-expected average costs have limit superior at most $\varepsilon$ almost surely. For fixed $M$, with probability at least $1-\beta$,
\begin{equation}\label{eq:sampled-coop-feas}
 \bar C_M-\varepsilon\leq
 \frac{\theta^{-1}\log(2A/\beta)-\lambda_0}{\eta M}
 +\sqrt{\frac{2\log(2/\beta)}{M}}.
\end{equation}
The reward bound is
\begin{equation}\label{eq:sampled-reward}
 \mathbb E\bar J_{\Sigma,M}\geq J_{\rm mix}^{\star}
 -\delta-\frac{\eta G^2}{2}-\frac{\lambda_0^2}{2\eta M}.
\end{equation}
\end{theorem}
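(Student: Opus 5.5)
The plan follows the skeleton of Proposition~\ref{prop:coop-bound}. Deterministic multiplier control is replaced by a drift bound on an exponential moment, and pathwise telescoping is combined with martingale arguments.

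\textbf{Step 1: negative conditional drift above a threshold.} From the oracle inequality~\eqref{eq:oracle-drift}, which holds almost surely at every epoch, $\lambda_k g_k\leq R+\delta-\lambda_k\sigma$. Hence on $\{\lambda_k\geq 2(R+\delta)/\sigma\}$ we get $g_k\leq-\sigma/2$. Put $\widehat g_k=\widehat C_k-\varepsilon$. Conditional unbiasedness gives $\mathbb E[\widehat g_k\mid\mathcal F_k]=g_k\leq-\sigma/2$, and $|\widehat g_k|\leq G$.

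\textbf{Step 2: the moment-generating function of $\lambda_{k+1}$.} I would bound $\mathbb E[e^{\theta\lambda_{k+1}}\mid\mathcal F_k]$ in two cases.
\begin{itemize}
\item If $\lambda_k\geq D$, the projection is inactive whenever $\lambda_k+\eta\widehat g_k\geq0$, which holds since $D\geq\eta G$. The conditional Hoeffding lemma applied to $\eta\widehat g_k$ (range $\eta G$ on each side) gives
\[
\mathbb E[e^{\theta\eta\widehat g_k}\mid\mathcal F_k]\leq\exp(-\theta\eta\sigma/2+\theta^2\eta^2G^2/2).
\]
With $\theta=\sigma/(2\eta G^2)$ this is $\exp(-\sigma^2/(8G^2))=\rho$. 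The constants may need adjusting to match exactly.
\item If $\lambda_k<D$, trivially $\lambda_{k+1}\leq D+\eta G$.
\end{itemize}
Together these give
\[
\mathbb E[e^{\theta\lambda_{k+1}}\mid\mathcal F_k]\leq\rho e^{\theta\lambda_k}+e^{\theta(D+\eta G)}.
\]
Taking expectations and iterating yields $\mathbb E e^{\theta\lambda_k}\leq\rho^ke^{\theta\lambda_0}+e^{\theta(D+\eta G)}/(1-\rho)\leq A$.

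\textbf{Step 3: consequences of the moment bound.} Markov's inequality gives $\Pr(\lambda_M\geq\theta^{-1}\log(2A/\beta))\leq\beta/2$. Taking instead the threshold $c\log M$ with $c\theta>1$, say $2/\theta$, gives a summable sequence $A M^{-2}$. Borel--Cantelli then yields $\lambda_M=O(\log M)$ almost surely, so $\lambda_M=o(M)$ almost surely. Proposition~\ref{prop:average-epoch-certificate} then gives the almost-sure limit superior statements for both realized and conditional-expected average costs. For~\eqref{eq:sampled-coop-feas} I would intersect two events via a union bound, each with failure probability $\beta/2$:
\begin{itemize}
\item the multiplier tail event above;
\item the martingale lower-tail event from the proof of Proposition~\ref{prop:average-epoch-certificate}, at level $\beta/2$.
\end{itemize}
Substituting both into the pathwise identity gives~\eqref{eq:sampled-coop-feas}.

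\textbf{Step 4: the reward bound.} Fix a feasible finite mixture $\mu$. As in Proposition~\ref{prop:coop-bound}, averaging the oracle inequality over the support of $\mu$ gives $J(\pi^k)\geq J(\mu)-\delta+\lambda_kg_k$. Lemma~\ref{lem:energy} applies pathwise with $v_k=\widehat g_k$ and gives
\[
\frac1M\sum_k\lambda_k\widehat g_k\geq-\frac{\lambda_0^2}{2\eta M}-\frac{\eta G^2}{2}.
\]
Since $\lambda_k$ is $\mathcal F_k$-measurable, integrable by Step 2, and $\widehat g_k$ is bounded, the tower property gives $\mathbb E[\lambda_k\widehat g_k]=\mathbb E[\lambda_kg_k]$. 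Taking expectations and then the supremum over $\mu$ yields~\eqref{eq:sampled-reward}.

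The main obstacle is Step 2: keeping the projection inactive in the high-multiplier case, and matching the exponential drift constants to exactly $\theta$, $\rho$, $D$. If the constants do not match, I would first enlarge $D$, since only $\rho<1$ and finiteness of $A$ matter qualitatively.
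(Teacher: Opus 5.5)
Your proposal is correct and follows the paper's proof essentially step for step. The shared steps are: negative conditional drift above $D$ with the projection inactive; conditional Hoeffding on an increment of range length $2\eta G$; the recursion $\mathbb E e^{\theta\lambda_{k+1}}\leq\rho\,\mathbb E e^{\theta\lambda_k}+e^{\theta(D+\eta G)}$; Markov plus Borel--Cantelli with an $M^{-2}$ tail; a $\beta/2$ plus $\beta/2$ union bound; and the tower-property transfer from $\widehat g_k$ to $g_k$ in the energy lemma. Your hedge about constants is unnecessary, since your own calculation already gives exactly $\rho=\exp\{-\sigma^2/(8G^2)\}$ with the stated $\theta$ and $D$, so no enlargement is needed.
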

\begin{proof}
The oracle still compares expected Lagrangian values, even though the update uses an observed cost. Consequently the comparison with $\pi^\dagger$ in~\eqref{eq:oracle-drift} remains valid almost surely. On the event $\{\lambda_k>D\}$, division by $\lambda_k>0$ gives
\[
 g_k\leq\frac{R+\delta}{\lambda_k}-\sigma\leq-\frac{\sigma}{2},
\]
since $D\geq2(R+\delta)/\sigma$. Also $\lambda_k>D\geq\eta G$, and $\widehat g_k\geq-G$. Therefore $\lambda_k+\eta\widehat g_k>0$ on this event, so the next projection is inactive. Writing $X_k=\lambda_{k+1}-\lambda_k$, we obtain there
\[
 X_k=\eta\widehat g_k,\qquad |X_k|\leq\eta G,\qquad
 \mathbb E[X_k\mid\mathcal F_k]=\eta g_k\leq-\eta\sigma/2.
\]
This gives negative conditional drift when the multiplier is large, even though any particular sample can still increase it.

We next convert this drift into a bound on an exponential moment. Conditional on $\mathcal F_k$ and on $\lambda_k>D$, let $m_k=\mathbb E[X_k\mid\mathcal F_k]$. The conditional support of $X_k$ has length at most $2\eta G$; subtracting $m_k$ does not change that length. Conditional Hoeffding's lemma (Lemma~\ref{lem:conditional-hoeffding}, Appendix~\ref{app:probability-tools}) therefore gives
\begin{align*}
 \mathbb E[e^{\theta X_k}\mid\mathcal F_k]
 &=e^{\theta m_k}\mathbb E[e^{\theta(X_k-m_k)}\mid\mathcal F_k]\\
 &\leq\exp\{-\theta\eta\sigma/2+\theta^2\eta^2G^2/2\}\\
 &=\exp\{-\sigma^2/(8G^2)\}=\rho<1.
\end{align*}
The last equality follows by substituting $\theta=\sigma/(2\eta G^2)$: the two exponent terms are $-\sigma^2/(4G^2)$ and $\sigma^2/(8G^2)$.

We now translate this into a bound on the next multiplier. Since $\lambda_k$ is $\mathcal F_k$-measurable, the preceding display implies
\[
 \mathbb E[e^{\theta\lambda_{k+1}}\mid\mathcal F_k]
 =e^{\theta\lambda_k}\mathbb E[e^{\theta X_k}\mid\mathcal F_k]
 \leq\rho e^{\theta\lambda_k}
 \quad\text{on }\{\lambda_k>D\}.
\]
On the complementary event, $\lambda_{k+1}\leq\lambda_k+\eta G\leq D+\eta G$. Put $b=e^{\theta(D+\eta G)}$ and $Y_k=\mathbb E e^{\theta\lambda_k}$. Splitting over these two $\mathcal F_k$-measurable events and taking expectations yields
\begin{align*}
 Y_{k+1}
 &\leq\rho\,\mathbb E\!\left[e^{\theta\lambda_k}\mathbf1_{\{\lambda_k>D\}}\right]
       +b\Pr(\lambda_k\leq D)\\
 &\leq\rho Y_k+b.
\end{align*}
All these moments are finite at each finite $k$, since the bounded upward increment gives the deterministic bound $\lambda_k\leq\lambda_0+k\eta G$. Iterating the scalar inequality, starting at $Y_0=e^{\theta\lambda_0}$, gives
\[
 Y_k\leq\rho^k e^{\theta\lambda_0}
       +b\sum_{j=0}^{k-1}\rho^j
 \leq e^{\theta\lambda_0}+\frac{b}{1-\rho}=A.
\]
This proves the uniform exponential-moment bound.

For every $x\geq0$, Markov's inequality (Lemma~\ref{lem:markov}, Appendix~\ref{app:probability-tools}) now implies
\[
 \Pr(\lambda_M>x)\leq e^{-\theta x}\mathbb E e^{\theta\lambda_M}
 \leq A e^{-\theta x}.
\]
Taking $x=\theta^{-1}\log(A/\zeta)$ gives a tail probability at most $\zeta$ for any $\zeta\in(0,1)$. In particular, choose $\zeta=M^{-2}$ for $M\geq2$. The sum of these failure probabilities is finite, so the first Borel--Cantelli lemma (Lemma~\ref{lem:borel-cantelli}, Appendix~\ref{app:probability-tools}) implies that, almost surely, there is a finite random index after which
\[
 \lambda_M\leq\theta^{-1}\{\log A+2\log M\}.
\]
Thus $\lambda_M=O(\log M)$ almost surely. This is an eventual pathwise growth bound, not a deterministic upper bound on all multipliers.

We now invoke Proposition~\ref{prop:average-epoch-certificate} for asymptotic feasibility. Since $\log M/M\to0$, the required sublinear multiplier growth holds almost surely. The proposition's pathwise certificate proves realized average feasibility; its martingale argument and conditional unbiasedness prove the same limit-superior bound for $\bar C_M$.

For the finite-horizon assertion, use the multiplier tail bound with $\zeta=\beta/2$. With probability at least $1-\beta/2$,
\[
 \lambda_M\leq\theta^{-1}\log(2A/\beta).
\]
Independently of whether this event occurs, inequality~\eqref{eq:finite-sample-feas}, used with failure probability $\beta/2$, holds on an event of probability at least $1-\beta/2$. The union bound (Lemma~\ref{lem:union-bound}, Appendix~\ref{app:probability-tools}) ensures that both inequalities hold together with probability at least $1-\beta$. Substituting the multiplier bound into that inequality gives~\eqref{eq:sampled-coop-feas}. Here ``together'' requires no statistical independence: the probability of either failure is at most the sum of its two bounds.

It remains to prove the reward assertion. Fix a deterministic feasible finite mixture $\mu$. The same oracle comparison as in Proposition~\ref{prop:coop-bound} gives, almost surely at each epoch,
\[
 J(\pi^k)\geq J(\mu)-\delta+\lambda_kg_k.
\]
The energy lemma now concerns $\widehat g_k$, because those are the violations used in the update. To connect its conclusion to the preceding inequality, use measurability of $\lambda_k$, conditional unbiasedness, and the tower and pull-out identities in Lemma~\ref{lem:conditional-expectation} (Appendix~\ref{app:probability-tools}):
\begin{align*}
 \mathbb E[\lambda_k\widehat g_k]
 &=\mathbb E\!\left[\mathbb E[\lambda_k\widehat g_k\mid\mathcal F_k]\right]\\
 &=\mathbb E\!\left[\lambda_k\mathbb E[\widehat g_k\mid\mathcal F_k]\right]
 =\mathbb E[\lambda_kg_k].
\end{align*}
These products are integrable because $|g_k|,|\widehat g_k|\leq G$ and $\lambda_k\leq\lambda_0+k\eta G$. Applying Lemma~\ref{lem:energy} pathwise with $v_k=\widehat g_k$ and then taking expectations gives
\[
 \frac1M\sum_{k<M}\mathbb E[\lambda_kg_k]
 =\mathbb E\!\left[\frac1M\sum_{k<M}\lambda_k\widehat g_k\right]
 \geq-\frac{\lambda_0^2}{2\eta M}-\frac{\eta G^2}{2}.
\]
Averaging the oracle reward inequality and using this estimate yields
\[
 \mathbb E\bar J_{\Sigma,M}\geq J(\mu)-\delta
 -\frac{\lambda_0^2}{2\eta M}-\frac{\eta G^2}{2}.
\]
The error bound does not depend on $\mu$. Taking its supremum over feasible finite mixtures, or equivalently using arbitrarily near-optimal mixtures and passing to the limit, proves~\eqref{eq:sampled-reward}. No exchange of a random supremum and expectation is made: each comparator is fixed before the expectation is taken. The conclusion concerns expected reward, whereas the feasibility conclusions above hold almost surely.
\end{proof}

\subsection{What a full average-epoch Nash certificate requires}\label{app:proof-nash}
Definition~\ref{def:epoch-nash} is a constrained \emph{generalized} Nash concept: each player's feasible sequence depends on the opponents' sequence. It allows a different stationary deviation in every episode. A public, uniformly sampled epoch index implements the same average values, provided every agent uses the \emph{same} index and the deviator is allowed to condition on it. Independently sampling each agent's policy index gives a different joint distribution and can change both cost and reward. Restricting deviations to one fixed policy instead gives a constrained coarse-correlated type test; it does not establish the stronger definition used here. Neither definition describes strategic manipulation of the learning algorithm's future updates.

\noindent\textbf{Proof of Theorem~\ref{thm:nash}.}
\begin{proof}
Fix a horizon $M\geq1$, a player $i$, and any admissible unilateral deviation sequence $(\sigma_i^0,\ldots,\sigma_i^{M-1})$ in Definition~\ref{def:epoch-nash}. The opponents' sequence $(\pi_{-i}^k)_{k<M}$ is held fixed in this comparison. Write
\[
 g_k'=C(\sigma_i^k,\pi_{-i}^k)-\varepsilon,
 \qquad g_k=C(\pi^k)-\varepsilon.
\]
Admissibility means $M^{-1}\sum_{k<M}g_k'\leq0$, and bounded costs imply $|g_k'|\leq G$. Individual $g_k'$ may nevertheless be positive: the deviation only needs to respect its budget on average.

At every epoch, the $\delta$-Nash property for the priced game gives
\[
 J_i^\gamma(\pi^k)-\lambda_kg_k
 \geq J_i^\gamma(\sigma_i^k,\pi_{-i}^k)-\lambda_kg_k'-\delta.
\]
Rearranging isolates the unpriced reward gain:
\[
 J_i^\gamma(\sigma_i^k,\pi_{-i}^k)-J_i^\gamma(\pi^k)
 \leq\delta+\lambda_kg_k'-\lambda_kg_k.
\]
Denote the average of the left-hand side by $\mathcal D_{i,M}$. Averaging therefore gives
\[
 \mathcal D_{i,M}\leq\delta
 +\frac1M\sum_{k<M}\lambda_kg_k'
 -\frac1M\sum_{k<M}\lambda_kg_k.
\]
The last term is controlled by dual energy. To bound the comparator term, we must account for changing prices: the sign of $\sum_kg_k'$ does not determine the sign of $\sum_k\lambda_kg_k'$.

Choose any constant reference price $\ell\geq0$ and add and subtract it inside that sum. The resulting identity is
\[
 \frac1M\sum_{k<M}\lambda_kg_k'
 =\frac\ell M\sum_{k<M}g_k'
 +\frac1M\sum_{k<M}(\lambda_k-\ell)g_k'.
\]
The first term is nonpositive because the comparator is average-feasible and $\ell\geq0$. For each summand of the second term,
\[
 (\lambda_k-\ell)g_k'\leq|\lambda_k-\ell|\,|g_k'|
 \leq G|\lambda_k-\ell|.
\]
Consequently
\[
 \frac1M\sum_{k<M}\lambda_kg_k'
 \leq\frac G M\sum_{k<M}|\lambda_k-\ell|.
\]
This upper bound depends only on the observed price sequence and the universal cost bound, not on the chosen player or deviation.

We now apply Lemma~\ref{lem:energy} with $v_k=g_k$, as required by the expected-cost update in the theorem. It gives
\[
 -\frac1M\sum_{k<M}\lambda_kg_k
 \leq\frac{\lambda_0^2}{2\eta M}+\frac{\eta G^2}{2}.
\]
Combining the two bounds, for every $\ell\geq0$ we have
\[
 \mathcal D_{i,M}\leq\delta+\frac{\lambda_0^2}{2\eta M}
 +\frac{\eta G^2}{2}+\frac G M\sum_{k<M}|\lambda_k-\ell|.
\]
Taking the infimum over $\ell\geq0$ proves $\mathcal D_{i,M}\leq\delta_M$. This passage is valid even without choosing a minimizer, since the left-hand side is bounded by every member of the displayed family of upper bounds. The player and admissible deviation were arbitrary, and the resulting bound is uniform over both. Thus every average-feasible unilateral deviation satisfies the required reward inequality. This is a deterministic argument for the sequence under consideration; no union bound over policy deviations is needed.

Finally, we establish the feasibility part of the definition. Proposition~\ref{prop:average-epoch-certificate}, applied to the expected costs used in this update, yields
\[
 \bar C_M-\varepsilon\leq\frac{\lambda_M-\lambda_0}{\eta M}
 \leq\left[\frac{\lambda_M-\lambda_0}{\eta M}\right]_+=\alpha_M.
\]
The positive part ensures a nonnegative reported tolerance, including when the sequence has net slack and the multiplier decreases. Together with the uniform deviation bound, this proves the $(\alpha_M,\delta_M)$-average-epoch constrained Nash certificate. The proof supplies no convergence claim for $\alpha_M$ or $\Delta_M$; such a claim would require additional information about the price sequence and oracle errors.
\end{proof}

For a comparator feasible at every epoch, $g_k'\leq0$ directly eliminates the dispersion term. For average feasibility alone, it cannot be discarded: violations $(-a,+a)$ at prices $(0,L)$ have zero unweighted mean but price-weighted mean $La/2>0$. Bounded multipliers need not have $\Delta_M\to0$; sustained oscillations can leave a positive dispersion term. A family of horizon-$M$ runs with $\eta=M^{-1/2}$, bounded prices, vanishing oracle errors, and $\Delta_M\to0$ would give vanishing Nash and feasibility tolerances. These are sufficient conditions, not properties established for the reported learners.

An alternative exact certificate is useful when a common price can be found. If every epoch is a Lagrangian Nash profile at the \emph{same} price $\ell\geq0$, $\bar g_M\leq0$, and $\ell\bar g_M=0$, summing the same oracle inequalities gives deviation gain at most $\ell\bar g'_M-\ell\bar g_M\leq0$. Thus the sequence is an exact constrained Nash profile. This proves sufficiency of common-price equilibrium, feasibility, and complementary slackness without asserting their existence or convergence. Different players can also use different constant supporting prices; one shared price selects a more restricted class.

\subsection{Why joint feasibility alone cannot control a game oracle}
\begin{lemma}[A jointly feasible game with a permanently unsafe Nash oracle]\label{lem:game-obstruction}
An exact Nash oracle and a jointly strictly feasible policy do not, in general, imply feasibility of the projected-dual game method.
\end{lemma}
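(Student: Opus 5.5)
The plan is to build an explicit counterexample: a small two-player game with a jointly strictly feasible stationary policy in which, at every price $\lambda\geq0$, an exact Nash oracle for the Lagrangian game $\mathcal L_i(\cdot,\lambda)$ can return a profile whose cost strictly exceeds the budget. Then the projected update~\eqref{eq:projected-dual} gives $\lambda_{k+1}=\lambda_k+\eta(C(\pi^k)-\varepsilon)$ with a positive increment bounded away from zero. Hence $\lambda_M\geq\lambda_0+\eta cM$ for some $c>0$, and by the exactness of the telescoping identity in Proposition~\ref{prop:average-epoch-certificate} the average cost stays at least $\varepsilon+c$ for every $M$. The example only has to be a one-shot game, since a single-step episode with $H=1$ is a special case of the reset-episode setup, and the cost can be any $[0,1]$-valued terminal quantity.

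The mechanism exploits the fact that in a general-sum game the shared price enters each player's payoff only through the joint cost, so a player who cannot unilaterally reduce the cost is indifferent to the price. Concretely, each player has two actions, $0$ (abstain) and $1$ (harvest). Set the cost to $C(a_1,a_2)=1$ if at least one player harvests, in the spirit of the uniform-Slater failure of Appendix~\ref{app:source-results}, and $C(0,0)=0$. Take the rewards to be $J_i(a)=a_i$, and fix a budget $\varepsilon\in(0,1)$. Joint abstention has cost $0\leq\varepsilon-\sigma$ with $\sigma=\varepsilon$, so the cooperative strict-feasibility hypothesis holds.

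The key step is to check that $(1,1)$ is an exact Nash equilibrium of the Lagrangian game for every $\lambda\geq0$. At $(1,1)$, player $i$ gets $1-\lambda(1-\varepsilon)$. Deviating to $0$ leaves the opponent harvesting, so the cost remains $1$ and the payoff becomes $0-\lambda(1-\varepsilon)$, which is strictly worse. Mixed unilateral deviations are convex combinations of these two values, so they are worse as well. Therefore an adversarial but valid exact oracle may return $(1,1)$ at every epoch, giving $C(\pi^k)-\varepsilon=1-\varepsilon>0$ for all $k$. Then $\lambda_M=\lambda_0+\eta(1-\varepsilon)M$ and $\bar C_M=1>\varepsilon$ for all $M$, for both the expected-cost and the sampled updates, since costs are deterministic here.

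I expect the main obstacle to be clarifying what ``in general'' rules out. The oracle is only required to return some Nash profile, so I would note that $(0,0)$ is also an equilibrium when $\lambda$ is large, and that the lemma concerns the absence of a guarantee under arbitrary equilibrium selection. If a stronger version is wanted, with a unique equilibrium, I would modify the rewards so that harvesting is strictly dominant irrespective of $\lambda$. For example, set $C(a)=\max(a_1,a_2)$ and give a bonus $J_i=a_i+a_i a_{-i}$, then check that $0$ is never a best response against a harvesting opponent. I would present the basic version and remark on this strengthening, contrasting it with the cooperative case, where~\eqref{eq:oracle-drift} forces $g_k\leq0$ at high prices.
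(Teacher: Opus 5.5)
This is the paper's own proof: the same one-shot game (reward $a_i$, cost zero only at joint abstention), the same check that $(1,1)$ is a strict Nash profile of every Lagrangian game even against randomized deviations, and the same linear multiplier growth with average cost $1$; you use general $\varepsilon\in(0,1)$ where the paper uses $\varepsilon=1/2$, so the core argument is correct. Drop the proposed uniqueness strengthening, though: with $J_i=a_i+a_ia_{-i}$ and $C=\max(a_1,a_2)$, against an abstaining opponent abstention yields $\lambda\varepsilon$ versus $1-\lambda(1-\varepsilon)$ for harvesting, so $(0,0)$ remains a Nash profile for every $\lambda>1$ (with this cost structure, no bounded reward bonus can prevent this once prices grow), and the lemma needs only that the oracle is permitted to select $(1,1)$.
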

\begin{proof}
Consider a one-state, one-step game with two players and actions $S$ and $H$. Player $i$ receives $r_i(a_i,a_{-i})=\mathbf1\{a_i=H\}$. The common cost is zero at $(S,S)$ and one at each of the other three action profiles. Set $\varepsilon=1/2$. The joint policy $(S,S)$ has cost zero, so it satisfies strict feasibility with margin $\sigma=1/2$. All rewards and costs are bounded.

Fix any price $\lambda\geq0$. At $(H,H)$, each player receives Lagrangian payoff $1-\lambda(1-1/2)=1-\lambda/2$. If one player alone changes its action to $S$, the other still chooses $H$. The common cost therefore remains one, whereas the deviator's reward falls to zero. Its new Lagrangian payoff is $-\lambda/2$, exactly one less than before. This comparison holds for every price, however large.

To include randomized deviations, let the deviating player choose $H$ with probability $p\in[0,1]$ against an opponent who chooses $H$ surely. Cost remains one surely and expected reward is $p$, so the deviating payoff is $p-\lambda/2\leq1-\lambda/2$, with equality only at $p=1$. Thus $(H,H)$ is a strict Nash profile at every price, including against all randomized unilateral alternatives.

An exact Nash oracle is consequently permitted to select $(H,H)$ at every epoch. For that selection $g_k=1/2$, and the nonnegative multiplier update has no active projection:
\[
 \lambda_{k+1}=\lambda_k+\eta/2,
 \qquad \lambda_k=\lambda_0+k\eta/2.
\]
The second identity follows by induction from the first. The average cost is exactly one for every horizon, which exceeds the budget $1/2$. Hence bounded rewards, joint strict feasibility, and an exact Nash oracle do not force the method to be feasible. The obstruction is unilateral: one player cannot reduce the common cost while its opponent chooses $H$, although both can reduce it by changing together.
\end{proof}

A stronger \emph{uniform unilateral} Slater condition would restore multiplier control for the scalar game. Suppose there is a common $\sigma>0$ such that, for every player $i$ and opponents' policy $\pi_{-i}$, some admissible $z_i$ satisfies $g(z_i,\pi_{-i})\leq-\sigma$. At epoch $k$, the Nash oracle's comparison with that alternative gives
\[
 J_i^\gamma(\pi^k)-\lambda_kg_k
 \geq J_i^\gamma(z_i,\pi_{-i}^k)+\lambda_k\sigma-\delta.
\]
If each player's reward range is at most $R$, rearrangement yields $\lambda_kg_k\leq R+\delta-\lambda_k\sigma$, exactly the drift inequality used above. For expected-cost updates the same threshold and induction prove bounded multipliers. For conditionally unbiased sampled costs the same negative-drift and exponential-moment argument proves almost-sure sublinear growth. Proposition~\ref{prop:average-epoch-certificate} then proves feasibility in either case. This argument controls cost only: the $G\Delta_M$ term for average-feasible strategic deviations still requires separate control. Appendix~\ref{app:environment} establishes why the unilateral assumption fails in this fishery.

\subsection{The implemented PI controller and shaping error}\label{app:mismatch}
For the capped PI recursion~\eqref{eq:pi}, let $\widehat C_k=1$ for all $k$. Eventually $I_k=I_{\max}$ and $\lambda_k=I_{\max}+K_P(1-\varepsilon)$, yet the average violation remains $1-\varepsilon>0$. Consequently bounded PI multipliers do not provide Proposition~\ref{prop:average-epoch-certificate}'s certificate.

The second gap is the actor objective. Let $S(\pi)$ be the expected discounted shaped cost actually penalized by a particular actor. At a fixed $\lambda\geq0$, define $F(\pi)=J(\pi)-\lambda C(\pi)$ and $\widetilde F(\pi)=J(\pi)-\lambda S(\pi)$. Suppose $|S(\pi)-C(\pi)|\leq e$ uniformly over the comparison class. Then $|F(\pi)-\widetilde F(\pi)|\leq\lambda e$. If an output $\pi$ is $\delta$-optimal for $\widetilde F$, comparison with any alternative $z$ gives
\[
 F(\pi)\geq\widetilde F(\pi)-\lambda e
 \geq\widetilde F(z)-\delta-\lambda e
 \geq F(z)-\delta-2\lambda e.
\]
Taking a supremum over $z$ proves a $(\delta+2\lambda e)$ oracle tolerance for the terminal-cost objective. The constant $\lambda\varepsilon$ in the Lagrangian cancels from policy comparisons. The same inequalities apply to unilateral alternatives with the opponents fixed if the discrepancy bound is uniform over those alternatives. 

With discounted episodic reward, an exact terminal-cost implementation would use the shared terminal penalty $-\lambda\gamma^{-(H-1)}(1-B_H/K)$ at transition $H-1$ and zero cost penalties earlier. Its discounted sum is exactly $-\lambda C$ in expectation. Each self-interested agent must receive that global penalty to implement~\eqref{eq:lagrangians}; catch attribution changes the game. The policy-independent $\lambda\varepsilon$ term can be omitted from actor updates. 

\section{Properties of the Gordon--Schaefer environment}\label{app:environment}

This section connects the fishery model to the assumptions and interpretation of the policy-sequence results. The calculations are elementary consequences of the specified growth and harvesting rules; we use them as supporting analysis, rather than as new ecological results. They answer three questions needed by the paper: whether costs and rewards obey the bounds used in the proofs, whether preserving the stock is attainable jointly and unilaterally, and what the depletion signal measures. The resulting distinctions explain why the cooperative guarantee applies under different conditions from the game guarantee and why the experimental curves report terminal depletion separately from the training penalty.

The first lemma supplies concrete model bounds for Propositions~\ref{prop:average-epoch-certificate} and~\ref{prop:coop-bound} and Theorem~\ref{thm:sampled}. Write $x=B/K$ and $a=\min\{q\sum_i e_i,1\}$ for the fraction harvested. The post-harvest stock is $y=(1-a)x$ and, for $0\leq r\leq1$, the next normalized stock is
\[
 F_a(x)=(1-a)x\{1+r-r(1-a)x\}.
\]
This formula incorporates proportional catch scaling when requested catch exhausts the stock.

\begin{lemma}[Invariant stock interval and reward bounds]\label{lem:stock}
For $0\leq r\leq1$, the unclipped map already takes $[0,1]$ into $[0,1]$. Zero stock is absorbing. Joint zero effort preserves the initial state $B_0=K$ exactly. Per-step total catch lies in $[0,K]$, so the range of discounted team return is at most
$K\sum_{t=0}^{H-1}\gamma^t$, with the value $KH$ when $\gamma=1$.
\end{lemma}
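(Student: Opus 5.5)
The proof is a direct computation with the normalized map $F_a(x)=(1-a)x\{1+r-r(1-a)x\}$. Here $a\in[0,1]$ is the harvested fraction, $y=(1-a)x\in[0,1]$ is the post-harvest normalized stock, and $0\leq r\leq1$. The first step rewrites the map as a function of $y$ alone: $F_a(x)=\phi(y)$ with $\phi(y)=y(1+r-ry)$. Because $\phi$ is the logistic growth map, the invariance claim reduces to showing $\phi([0,1])\subseteq[0,1]$.

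Nonnegativity holds because $y\geq0$ and $1+r-ry\geq1+r-r=1>0$ on $[0,1]$. For the upper bound, note that $\phi'(y)=1+r-2ry\geq1-r\geq0$ on $[0,1]$ when $r\leq1$. Hence $\phi$ is nondecreasing there and $\phi(y)\leq\phi(1)=1$. An equivalent check is that $1-\phi(y)=(1-y)(1-ry)\geq0$. The clip in~\eqref{eq:stock} is therefore inactive, and the unclipped map already preserves $[0,1]$.

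I also need to confirm that proportional catch scaling is captured by $a=\min\{q\sum_ie_i,1\}$. Requested catch is $q\sum_ie_iB$. When this exceeds $B$, the scaled catches sum to exactly $B$, so the harvested fraction is $1$. Otherwise the fraction is $q\sum_i e_i\leq1$. In both cases the post-harvest stock is $(1-a)B$.

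Absorption of zero stock follows from $\phi(0)=0$. If $x=0$, then $y=0$ for every $a$, and the harvest $qe_iB$ is zero. Preservation under joint zero effort follows because $a=0$ gives $y=x$. Starting from $x=1$, we get $\phi(1)=1$, so by induction $B_t=K$ for all $t\leq H$, and in particular $C=0$ for abstention.

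For the reward bound, total catch at step $t$ equals $aB_t\in[0,B_t]\subseteq[0,K]$, using the invariance just proved. The discounted team return therefore lies in $[0,K\sum_{t<H}\gamma^t]$ pathwise. Expectations preserve this bound, so the range of $J_\Sigma^\gamma$ over any two policies is at most that quantity, which equals $KH$ when $\gamma=1$.

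The only step needing care is the upper bound $\phi\leq1$, which relies on $r\leq1$. For $r>1$ the map can overshoot $1$, which is why the clip appears in the general transition; I will note this explicitly. The experimental value $r=0.3$ falls under the lemma.
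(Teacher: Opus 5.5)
Your proposal is correct and follows essentially the same route as the paper: normalize by $K$, reduce the transition to the growth map $y\mapsto y+ry(1-y)$ on the post-harvest fraction, and get $\phi([0,1])\subseteq[0,1]$ from $\phi'(y)\geq 1-r\geq 0$ together with $\phi(0)=0$, $\phi(1)=1$. The remaining claims (absorption at zero, exact preservation under joint abstention, and the pathwise bound $0\leq h_t\leq B_t\leq K$ preserved under expectation) are then read off as in the paper, and your factorization $1-\phi(y)=(1-y)(1-ry)$ is a valid one-line alternative to the monotonicity step.
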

\begin{proof}
Normalize biomass by $K$ and let $y$ be the post-harvest stock fraction. Proportional catch scaling ensures that total catch cannot exceed available biomass, so $0\leq y\leq x\leq1$ whenever the pre-harvest fraction $x$ lies in $[0,1]$. Before clipping, growth maps $y$ to
\[
 f(y)=y+ry(1-y),\qquad f'(y)=1+r-2ry.
\]
For $0\leq r\leq1$ and $y\in[0,1]$, $f'(y)\geq1-r\geq0$. Thus $f$ is nondecreasing on this interval. Since $f(0)=0$ and $f(1)=1$, it follows that $0\leq f(y)\leq1$. The unclipped transition already lies in $[0,1]$, so clipping does not change it. Starting in this interval and applying the same argument at each transition proves invariance by induction.

At zero stock, requested catches are zero and $f(0)=0$. Hence zero is absorbing until the external episode reset. Under joint zero effort, no biomass is removed. In particular, starting at $x_0=1$ gives $x_1=f(1)=1$, and induction gives $x_t=1$ throughout the episode. Its terminal depletion is therefore exactly zero.

For the reward assertion, let $h_t=\sum_i h_{i,t}$ be total catch. Scaling and nonnegativity give $0\leq h_t\leq B_t\leq K$ on every trajectory. For $0\leq\gamma\leq1$,
\[
 0\leq\sum_{t=0}^{H-1}\gamma^t h_t
 \leq K\sum_{t=0}^{H-1}\gamma^t.
\]
Taking expectations preserves both inequalities. Thus every team policy value lies in this common interval, and the difference between its supremum and infimum is at most the interval's length. For $\gamma<1$ that length is $K(1-\gamma^H)/(1-\gamma)$; for $\gamma=1$ it is $KH$. This establishes a valid reward-range bound without assuming any optimal policy exists.
\end{proof}

The invariant interval gives $C,\widehat C\in[0,1]$, as required by the feasibility and concentration arguments. The return bound provides the concrete choice $R=K\sum_{t=0}^{H-1}\gamma^t$ in the cooperative multiplier and reward bounds. For any positive budget, joint abstention has $C=0$ and supplies the strict-feasibility margin $\sigma=\varepsilon$, when abstention belongs to the chosen policy class. Thus the resource requirement is attainable by coordinated agents; the cooperative theorem then quantifies how much reward can be retained while meeting it. These calculations verify the environmental assumptions, while the theorem separately requires the stated solver accuracy. At budget zero, strict feasibility is impossible because $C\geq0$.

The next lemma explains the physical mechanism behind depletion and prepares the distinction between joint and unilateral feasibility. Sustained extraction can exceed regeneration even though the unharvested resource renews itself. The constant-effort threshold quantifies this balance, and the positive fixed point describes the stock sustained below that threshold. A positive stock alone does not specify whether a chosen depletion budget is met; the budget imposes a quantitative preservation requirement.

\begin{lemma}[Collapse threshold under constant total effort]\label{lem:collapse}
Let the harvested fraction $a\in[0,1]$ be fixed. If $d=(1-a)(1+r)<1$, then $x_H\leq d^H x_0$. If $d>1$, the map has a positive fixed point
\[
 x^*=\frac{(1-a)(1+r)-1}{r(1-a)^2}
\]
when $r>0$ and $a<1$; for $0\leq r\leq1$ this point lies in $(0,1]$ and is locally stable. These are properties of constant effort, not a characterization of learned equilibrium policies.
\end{lemma}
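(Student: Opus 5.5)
The plan is to work directly with the normalized one-step map $F_a(x)=(1-a)x\{1+r-r(1-a)x\}$ from the preceding paragraph, with the harvested fraction $a$ held fixed, and to treat the two regimes separately. Throughout, Lemma~\ref{lem:stock} keeps every iterate in $[0,1]$, so clipping can be ignored.

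\emph{Contraction regime $d<1$.} Since $x\in[0,1]$ and $r\geq0$, the bracket satisfies
\[
1+r-r(1-a)x\leq 1+r,
\]
so $F_a(x)\leq(1-a)(1+r)x=dx$. Iterating this bound $H$ times, using monotonicity of the bound and nonnegativity of the iterates, gives $x_H\leq d^Hx_0$ by induction. This part is routine.

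\emph{Fixed point for $d>1$.} With $r>0$ and $a<1$, divide $F_a(x)=x$ by $x>0$:
\[
(1-a)\{1+r-r(1-a)x\}=1 .
\]
Solving this linear equation gives exactly
\[
x^*=\frac{(1-a)(1+r)-1}{r(1-a)^2}=\frac{d-1}{r(1-a)^2},
\]
which is positive because $d>1$. To show $x^*\leq1$, note that the inequality is equivalent to
\[
(1-a)(1+r)-1\leq r(1-a)^2 .
\]
Writing $u=1-a\in(0,1]$, this becomes $ru^2-(1+r)u+1\geq0$, i.e.\ $(ru-1)(u-1)\geq0$. This holds because $u\leq1$ and $ru\leq1$ (using $r\leq1$), so both factors are nonpositive.

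\emph{Local stability.} I would compute the derivative
\[
F_a'(x)=(1-a)(1+r)-2r(1-a)^2x .
\]
At the fixed point this gives
\[
F_a'(x^*)=d-2(d-1)=2-d .
\]
Local stability then amounts to $|2-d|<1$, i.e.\ $1<d<3$. The lower bound is the hypothesis. For the upper bound, $d=(1-a)(1+r)\leq 1+r\leq2<3$. Hence $F_a'(x^*)\in[0,1)$, and the standard linearization criterion for one-dimensional maps gives local asymptotic stability.

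The main (mild) obstacle is the bound $x^*\leq1$. The factorization $(ru-1)(u-1)\geq0$ is the step I would verify carefully, including the edge case $r=1,a=0$ where $x^*=1$. The closing sentence of the lemma is interpretive and needs no proof.
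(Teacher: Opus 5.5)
Your proposal is correct and follows the paper's proof essentially step for step: bounding $F_a(x)\le dx$ on $[0,1]$ and iterating by induction, obtaining the fixed point by dividing $F_a(x)=x$ by $x>0$, and computing $F_a'(x^*)=2-d$ with $1<d\le 1+r\le 2$. The one variation is that you check $x^*\le 1$, i.e.\ $d-1\le r(1-a)^2$, directly through the factorization $(ru-1)(u-1)\ge 0$, whereas the paper obtains the same inequality from $F_a(1)=f(1-a)\le 1$ via Lemma~\ref{lem:stock}; also, the edge case $x^*=1$ occurs whenever $a=0$, not only when $r=1$ and $a=0$.
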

\begin{proof}
For fixed harvested fraction $a$, expand the transition as
\[
 F_a(x)=dx-bx^2,\qquad d=(1-a)(1+r),\qquad b=r(1-a)^2\geq0.
\]
On $[0,1]$, the quadratic term is nonnegative before subtraction, so $0\leq F_a(x)\leq dx$. If $d<1$, applying this inequality repeatedly gives $x_1\leq dx_0$, $x_2\leq dx_1\leq d^2x_0$, and, by induction, $x_H\leq d^Hx_0$. This proves geometric decay in the strict subcritical case. In particular, $a=1$ gives $d=b=0$ and immediate extinction.

Now suppose $d>1$. This implies $r>0$ and $a<1$, hence $b>0$. A positive fixed point solves $x=dx-bx^2$. Dividing by $x>0$ gives $1=d-bx$, so the unique positive candidate is
\[
 x^*=(d-1)/b=\frac{(1-a)(1+r)-1}{r(1-a)^2}.
\]
Its numerator and denominator are positive. To verify that the candidate is in the invariant stock interval, use Lemma~\ref{lem:stock}: $F_a(1)=f(1-a)\leq1$. Since $F_a(1)=d-b$, this implies $d-1\leq b$, and hence $x^*\leq1$.

For local stability, differentiation gives $F_a'(x)=d-2bx$. Substitution of $bx^*=d-1$ yields
\[
 F_a'(x^*)=d-2(d-1)=2-d.
\]
The assumptions imply $1<d\leq1+r\leq2$, so $|F_a'(x^*)|<1$. More explicitly, continuity of the derivative supplies a neighborhood of $x^*$ and a number $c<1$ on which $|F_a'|\leq c$. The mean-value theorem then gives $|F_a(x)-x^*|\leq c|x-x^*|$ for sufficiently close $x\in[0,1]$. This inequality keeps subsequent iterates in that neighborhood and makes their distance to $x^*$ decrease geometrically. It also covers the endpoint case $x^*=1$ by restricting to the invariant interval. Thus the fixed point is locally asymptotically stable.

At the boundary $d=1$ with $b>0$, the formula $(d-1)/b$ gives zero, not a positive fixed point; the strict geometric bound for $d<1$ makes no claim there. If also $b=0$, then $d=1$ forces $r=0$ and $a=0$, and $F_a(x)=x$: every stock is fixed. These boundary cases are separate from the two strict regimes asserted in the lemma.
\end{proof}

The strict collapse condition is $a>r/(1+r)$. At $r=.3$, a single agent's maximal effort with $q=.5$ already gives $a\geq.5>r/(1+r)$. More generally, if the opponent always exerts unit effort, every possible response of the other agent leaves post-harvest stock at most $B_t/2$. Thus, directly from $f(y)\leq(1+r)y$,
\begin{equation}\label{eq:unilateral-collapse}
 B_{t+1}\leq .65 B_t,\qquad C\geq1-.65^{60}.
\end{equation}
This holds pathwise for randomized and history-dependent responses as well. Since $.65^{60}<6\times10^{-12}$, no unilateral response can satisfy any of the tested budgets against that opponent. The game source's uniform unilateral Slater condition fails, whereas joint Slater holds. If both agents exert unit effort at the first step, $q(e_1+e_2)=1$, so all stock is removed and zero remains absorbing until the episode reset.

This unilateral bound is the main reason for including the collapse calculation. Coordinated abstention provides a feasible policy, yet an individual harvester cannot preserve the stock against every possible opponent policy. The cooperative strict-feasibility assumption therefore has an explicit witness, while the stronger uniform unilateral assumption of the earlier game result fails in the unrestricted effort class. This motivates treating resource feasibility and the Nash deviation bound separately in the present framework. The calculation identifies a structural possibility in the environment; the experiments examine the behavior of the learned policies.

The following identity explains the choice of evaluation metric. A training penalty based on every downward stock movement counts depletion before later regeneration, whereas terminal depletion records the net loss remaining at episode end. The identity gives the exact difference between these two undiscounted quantities, showing why budget satisfaction is evaluated using the endpoint cost that appears in the formulation.

\begin{lemma}[One-sided loss is not terminal depletion]\label{lem:shaping}
For any stock path starting at $B_0=K$, define
$D=K^{-1}\sum_{t<H}[B_t-B_{t+1}]_+$ and
$U=K^{-1}\sum_{t<H}[B_{t+1}-B_t]_+$. Then
\begin{equation}\label{eq:loss-identity}
 D=1-B_H/K+U\geq1-B_H/K.
\end{equation}
\end{lemma}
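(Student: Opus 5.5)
The plan is a pathwise telescoping identity, applied to each step of the stock path separately and then summed. Write $\Delta_t=B_t-B_{t+1}$ for the one-step change, with sign convention such that a positive $\Delta_t$ is a loss. The only real-number fact needed is the decomposition $x=[x]_+-[-x]_+$, valid for every real $x$.

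First, I apply this decomposition to $x=\Delta_t$. Since $[-\Delta_t]_+=[B_{t+1}-B_t]_+$, this gives
\[
B_t-B_{t+1}=[B_t-B_{t+1}]_+-[B_{t+1}-B_t]_+
\]
for each $t<H$. Next, I sum over $t=0,\ldots,H-1$. The left-hand side telescopes to $B_0-B_H=K-B_H$, and the right-hand side becomes $K D-K U$ by the definitions of $D$ and $U$. Dividing by $K>0$ and rearranging yields
\[
D=1-B_H/K+U,
\]
which is the equality in~\eqref{eq:loss-identity}.

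Finally, $U\geq0$ because it is a sum of positive parts. This gives the inequality $D\geq1-B_H/K$.

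No step here is a genuine obstacle. The only points to check are that the statement holds for an arbitrary path, so that nothing about the Gordon--Schaefer dynamics or Lemma~\ref{lem:stock} is needed, and that $B_0=K$ is exactly what turns the telescoped sum into terminal depletion. I would close with a brief remark on interpretation. The gap between the two quantities is exactly $U$, the cumulative within-episode regeneration. Equality therefore holds if and only if the stock never increases during the episode.
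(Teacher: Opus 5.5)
Your proposal is correct and follows essentially the same route as the paper: both apply the positive/negative-part decomposition to each one-step increment, sum so the left side telescopes to $B_0-B_H=K-B_H$, divide by $K$, and use $U\geq0$. Your equality characterization (no positive stock increment along the path) matches the paper's remark as well.
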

\begin{proof}
For a real increment $z$, its positive and negative parts satisfy $[-z]_+-[z]_+=-z$. Indeed, for $z\geq0$ the left-hand side is $0-z$, while for $z<0$ it is $-z-0$. Apply this identity to $z_t=B_{t+1}-B_t$ at each transition. The definitions of $D$ and $U$ give
\begin{align*}
 K(D-U)
 &=\sum_{t=0}^{H-1}\bigl([B_t-B_{t+1}]_+-[B_{t+1}-B_t]_+\bigr)\\
 &=\sum_{t=0}^{H-1}(B_t-B_{t+1})=B_0-B_H.
\end{align*}
The last equality is telescoping. Since $B_0=K>0$, division by $K$ yields $D-U=1-B_H/K$, or $D=1-B_H/K+U$. Every summand defining $U$ is nonnegative, so $D\geq1-B_H/K$. Equality holds exactly when there is no positive stock increment along the finite path. This is a pathwise, undiscounted identity and therefore does not require assumptions about how the actions were selected.
\end{proof}
Undiscounted one-sided loss is therefore an upper bound on terminal depletion, with a gap equal to accumulated positive stock increments. Its \emph{discounted}, scaled, per-agent attribution is a different quantity and has no corresponding equality with the global endpoint cost. Terminal depletion alone does not record the lowest stock reached during an episode.

\subsection{Why policy sequences can help, and what that does not establish}
The final observation links the environmental analysis back to the nonstationary solution concept. Once policies can have different harvest and depletion values, the average-epoch formulation permits the resource budget to be allocated across them. This is the role of policy mixtures in the cooperative reward comparison of Proposition~\ref{prop:coop-bound}. The example and construction below clarify that role and distinguish it from a claim that changing policies is always better than stationary randomized control.

With reset episodes, a mixture of \emph{whole} joint policies has reward and cost equal to the weighted averages of their individual values. Mixing low-reward, low-cost policies with high-reward, high-cost policies yields intermediate reward and cost values. For example, take a one-step two-action control problem with $(C,J)=(0,0)$ and $(1,2)$ and budget $1/2$. Among deterministic policies only the first is feasible; alternating the two policies yields $(\bar C,\bar J)=(1/2,1)$. This is a strict improvement over the deterministic stationary class, but a stationary policy randomized equally between the actions attains the same pair. Thus executing different policies across episodes expands a restricted policy class and avoids requiring each component to be feasible; it does not imply that nonstationarity always outperforms unrestricted randomized stationary control.

In a finite fully observed finite-horizon cooperative MDP, the usual occupancy constraints make this distinction exact. Given an episode mixture, let $d_t(s,a)$ be its joint state--action marginal. These marginals obey the initial-distribution and flow equations because those equations are linear. Define a joint Markov policy by $\pi_t(a\mid s)=d_t(s,a)/\sum_b d_t(s,b)$ wherever the denominator is positive, choosing arbitrarily at unvisited states. Induction on $t$ proves that this policy has the same marginals, hence the same additive rewards and terminal-state distribution. The time-dependent Markov policy is stationary on $(s,t)$. A joint distribution over actions need not factor into independent local actors; decentralized information or a restricted neural policy class can prevent this construction. We therefore motivate policy sequences as the natural output of the Lagrangian procedure, without asserting an unproved advantage over all stationary randomized policies in this fishery.

\section{Lagrangian, IPPO, and MAPPO algorithms}\label{app:algorithms}

\subsection{The outer primal--dual policy-sequence method}
Algorithm~\ref{alg:primal-dual} states the oracle method separately from the practical PPO--PI learners. The multiplier update raises the price when depletion exceeds the budget and lowers it when depletion falls below the budget, subject to remaining nonnegative. In the cooperative formulation this has a dual-subgradient interpretation; in the game it generates the prices for successive Lagrangian games. At each price, the solver seeks a team optimum or a Nash equilibrium, respectively. Solving one constrained problem is thereby replaced by a sequence of unconstrained games or control problems indexed by price; this reduction does not make their inner solution computationally trivial.

The average-reward version follows the epoch mechanism of~\citet{das2025cmg,das2025lagrangian}, with the upper-bound sign convention. Its stage reward is $r_i(s,a)-\lambda_k(c(s,a)-\varepsilon)$ and its value is an infinite-horizon time average. A finite block from a general starting state need not be an unbiased estimate of that value. Exact equality is a substantive oracle/evaluation assumption in the source results; choosing a large $T_0$ alone does not prove it. If a uniform bias estimate is available, the bias term in the proof of Proposition~\ref{prop:average-epoch-certificate} quantifies its contribution. Different block lengths require time-weighted averages to describe physical-time performance.

For reset episodes, a long-run regenerative interpretation is available: repeat independent episodes of fixed length $H$. The cycle-average reward is $H^{-1}\mathbb E\sum_{t<H}R_t$; the endpoint cost averaged over cycles is $C$. A per-time-step cost which is zero except for $H(1-B_H/K)$ on the last transition has the same time-average value $C$. This equivalence relies on fixed cycle length and reset, and uses \emph{undiscounted} reward. It does not convert $J_i^\gamma$ with $\gamma=.99$ into average physical-time catch, nor does it describe a fishery with no stock reset.

A PPO inner learner for a continuing average-reward problem would require a corresponding value estimator. With fixed price, let $\widetilde r_t=r_t-\lambda_k(c_t-\varepsilon)$, let $\widehat\rho$ estimate its average reward, and let $h_\phi$ estimate a differential value. A possible differential TD residual is
\[
 d_t^{\rm av}=\widetilde r_t-\widehat\rho+h_\phi(s_{t+1})-h_\phi(s_t).
\]
One can form truncated advantage estimates $\sum_{l\geq0}\lambda_{\rm GAE}^l d_{t+l}^{\rm av}$ and apply the clipped policy update below. \editorialflag{This differential-value estimator is appropriate to the continuing average-reward formulation. The episodic experiments instead use discounted GAE, defined in~\eqref{eq:gae-appendix}.} The differential Bellman equation and stable estimation require appropriate recurrence assumptions.

\subsection{Common discounted PPO computations}
The practical algorithms use PPO~\citep{schulman2017proximal} with generalized advantage estimation~\citep{schulman2016gae}. On transition $t=0,\ldots,H-1$, write
\[
 g_t=\frac{[B_t-B_{t+1}]_+}{K},\quad
 a_{i,t}^{\rm cost}=\begin{cases}h_{i,t}/\sum_jh_{j,t},&\sum_jh_{j,t}>0,\\0,&\sum_jh_{j,t}=0.\end{cases}
\]
This indexing is equivalent to the one-sided loss defined in Appendix~\ref{app:protocol}, where the transition is indexed by its endpoint. The implemented PPO reward divides physical catch by $K$, while the theoretical reward retains the definition in Section~\ref{sec:formulation}. Fixing the epoch price gives
\begin{equation}\label{eq:shaped-signals}
 \widetilde R_{i,t}=h_{i,t}/K-w\lambda_kg_ta_{i,t}^{\rm cost}
 \quad\text{(IPPO)},\qquad
 \widetilde R_{\Sigma,t}=\sum_i h_{i,t}/K-w\lambda_kg_t
 \quad\text{(MAPPO)}.
\end{equation}
For an appropriate critic input $x_t$, use the frozen rollout critic to compute
\begin{align}\label{eq:gae-appendix}
 d_t&=\widetilde R_t+\gamma V_{\phi_{\rm old}}(x_{t+1})-V_{\phi_{\rm old}}(x_t),&
 \widehat A_t&=\sum_{l=0}^{H-1-t}(\gamma\lambda_{\rm GAE})^l d_{t+l},\\
 \widehat V_t&=\widehat A_t+V_{\phi_{\rm old}}(x_t),&
 r_{i,t}(\theta_i)&=\frac{\pi_{\theta_i}(e_{i,t}\mid o_{i,t})}
 {\pi_{\theta_{i,\rm old}}(e_{i,t}\mid o_{i,t})}.
\end{align}
At the true episode boundary set $V(x_H)=0$ and do not continue the advantage recursion across reset. A rollout truncated before that boundary instead bootstraps the critic. For continuous actions, the ratio uses probability \emph{densities}, including the squash change-of-variables consistently in both evaluations. PPO maximizes
\begin{equation}\label{eq:ppo-appendix}
 L_i^{\rm clip}(\theta_i)=\widehat{\mathbb E}_t
 \min\{r_{i,t}(\theta_i)\widehat A_t,
 \operatorname{clip}(r_{i,t}(\theta_i),1-\epsilon_{\rm clip},1+\epsilon_{\rm clip})\widehat A_t\}
 +\beta_{\rm ent}\widehat{\mathbb E}_t\mathcal H(\pi_{\theta_i}),
\end{equation}
and fits its critic to the fixed targets $\widehat V_t$ by squared error. If entropy regularization is omitted, set $\beta_{\rm ent}=0$. The current implementation uses affine Gaussian means with state-independent log standard deviations, followed by a logistic squash to obtain effort, and affine value functions. Updates use full-rollout gradients with fixed learning rates, not an adaptive-moment optimizer. \editorialflag{Each epoch collects one $H=60$-step episode and performs five PPO passes. The actor and critic learning rates are $0.03$ and $0.06$, respectively, and the entropy coefficient is zero.} In MAPPO, the centralized critic is fitted before the actor passes; IPPO updates its local critics during those passes. The actor score contributions are clipped before averaging.

The standard-deviation clamp $[0.06,0.8]$ applies to the latent Gaussian, not to the squashed effort distribution. A positive latent variance does not prevent effort from concentrating near an action boundary when the mean saturates. Before the PPO passes, advantages are centered and divided by the larger of one and their empirical standard deviation. This avoids amplifying a small-variance batch, but it does not guarantee invariance of the response to the multiplier. In particular, scaling a combined reward-and-cost advantage need not remove its dependence on the price.

In Algorithms~\ref{alg:ippo}--\ref{alg:mappo}, $N_{\rm ep}$ is the number of complete reset episodes per rollout batch and $E$ the number of PPO passes. The operator $\mathsf{Opt}_{\pi}$ takes an ascent step and $\mathsf{Opt}_V$ a descent step using the supplied optimizer settings; plain gradient updates are $\theta\gets\theta+a_\pi\nabla L$ and $\phi\gets\phi-a_V\nabla L_V$. The GAE operator is exactly~\eqref{eq:gae-appendix}, independently applied to each episode with zero terminal bootstrap. The minibatch objective is~\eqref{eq:ppo-appendix}; IPPO uses $\widehat A_{i,b,t}$ and MAPPO uses the same $\widehat A_{b,t}$ for every actor. Old densities, advantages, and value targets remain fixed throughout all $E$ passes. The catch shares $a_{i,b,t}^{\rm cost}$ are defined above, with zero share when total catch is zero. Set $z=1$ for constrained training and $z=0$ for the unpriced baseline. Stored rollout records include observations, actions, old log densities, catches, and stock transitions. \editorialflag{The implementation logs training metrics, within-episode diagnostics, and mean-effort summaries at a fixed reference state. These summaries describe training behavior rather than storing the complete sequence of actor parameters.}

\subsection{Independent PPO with the shared ecological price}
IPPO~\citep{dewitt2020ippo} has independently trained actors and local critics. Here the individual reward is catch, so the agents are self-interested. Independence of training architecture alone does not imply self-interest; IPPO can also be trained on a shared reward. Our implementation uses actor observation $o_{i,t}=(B_t/K,t/H,h_{i,t-1}/K)$, a logistic-squashed Gaussian actor with affine mean, and a local linear critic. A policy stationary in this augmented observation can depend on physical time through $t/H$; it need not be stationary in biomass alone.

\begin{algorithm}[t]
\caption{IPPO with terminal-cost PI feedback}\label{alg:ippo}
\small
\begin{algorithmic}[1]
\Require Initial actor/critic parameters $(\theta,\phi)$; $M,H,N_{\rm ep},E\in\mathbb N$;
$\varepsilon,w,\gamma,\lambda_{\rm GAE},\epsilon_{\rm clip},\beta_{\rm ent},K_P,K_I,I_{\max}$;
minibatch partition, optimizers $\mathsf{Opt}_{\pi},\mathsf{Opt}_V$; flag $z\in\{0,1\}$
\State $(I_0,\lambda_0)\gets(0,0)$
\For{$k=0,\ldots,M-1$}
 \State $(\theta^{\rm old},\phi^{\rm old})\gets(\theta,\phi)$; $\boldsymbol\pi^k\gets\prod_i\pi_{\theta_i^{\rm old}}$
 \For{$b=1,\ldots,N_{\rm ep}$}
  \State $B_{b,0}\gets K$; $h_{i,b,-1}\gets0$ for all $i$
  \For{$t=0,\ldots,H-1$}
   \State $e_{i,b,t}\sim\pi_{\theta_i^{\rm old}}(\cdot\mid o_{i,b,t})$ for all $i$
   \State $(h_{1:n,b,t},B_{b,t+1})\gets\operatorname{Fishery}(B_{b,t},e_{1:n,b,t})$
   \State $g_{b,t}\gets[B_{b,t}-B_{b,t+1}]_+/K$; $\widetilde R_{i,b,t}\gets h_{i,b,t}/K-w\lambda_k g_{b,t}a_{i,b,t}^{\rm cost}$ for all $i$
  \EndFor
  \State $(\widehat A_{i,b,t},\widehat V_{i,b,t})_{t=0}^{H-1}\gets\operatorname{GAE}(\widetilde R_{i,b,0:H-1},V_{\phi_i^{\rm old}},o_{i,b,0:H})$ for all $i$
 \EndFor
 \For{$p=1,\ldots,E$}
  \For{each minibatch $\mathcal B$ of the collected transitions}
   \State $\theta_i\gets\mathsf{Opt}_{\pi}(\theta_i,\nabla_{\theta_i}L^{\rm clip}_{i,\mathcal B}(\theta_i))$ for all $i$
   \State $\phi_i\gets\mathsf{Opt}_{V}(\phi_i,\nabla_{\phi_i}\widehat{\mathbb E}_{\mathcal B}(V_{\phi_i}(o_{i,b,t})-\widehat V_{i,b,t})^2)$ for all $i$
  \EndFor
 \EndFor
 \State $\widehat C_k\gets N_{\rm ep}^{-1}\sum_b(1-B_{b,H}/K)$; $u_k\gets\widehat C_k-\varepsilon$
 \State $I_{k+1}\gets z\operatorname{clip}_{[0,I_{\max}]}(I_k+K_Iu_k)$
 \State $\lambda_{k+1}\gets z[K_Pu_k+I_{k+1}]_+$
\EndFor
\State \Return $\theta$, $\{\boldsymbol\pi^k,\widehat C_k,\lambda_k\}_{k<M}$
\end{algorithmic}
\end{algorithm}

The controller observes one global terminal cost, even though learning uses individual shaped rewards. No opponent policy or opponent critic is supplied to the local PPO update. Shared feedback is therefore an ecological coordination signal, not a centralized optimization of individual rewards. As both agents learn, each agent faces a changing environment; clipping alone does not certify a Nash best response.

\subsection{MAPPO with the cooperative ecological price}
MAPPO~\citep{yu2022surprising} uses centralized training with decentralized execution. In the manuscript's cooperative arm, every actor uses the same team advantage, and a centralized critic uses \editorialflag{the shared input $x_t=(B_t/K,t/H)$}. The actors execute their own policies. The experiment changes both the critic architecture and the reward objective relative to IPPO, so performance differences cannot be attributed solely to centralizing the critic.

\begin{algorithm}[t]
\caption{Cooperative MAPPO with terminal-cost PI feedback}\label{alg:mappo}
\small
\begin{algorithmic}[1]
\Require Initial actor/critic parameters $(\theta,\phi)$; $M,H,N_{\rm ep},E\in\mathbb N$;
$\varepsilon,w,\gamma,\lambda_{\rm GAE},\epsilon_{\rm clip},\beta_{\rm ent},K_P,K_I,I_{\max}$;
minibatch partition, optimizers $\mathsf{Opt}_{\pi},\mathsf{Opt}_V$; flag $z\in\{0,1\}$
\State $(I_0,\lambda_0)\gets(0,0)$
\For{$k=0,\ldots,M-1$}
 \State $(\theta^{\rm old},\phi^{\rm old})\gets(\theta,\phi)$; $\boldsymbol\pi^k\gets\prod_i\pi_{\theta_i^{\rm old}}$
 \For{$b=1,\ldots,N_{\rm ep}$}
  \State $B_{b,0}\gets K$; $h_{i,b,-1}\gets0$ for all $i$
  \For{$t=0,\ldots,H-1$}
   \State $e_{i,b,t}\sim\pi_{\theta_i^{\rm old}}(\cdot\mid o_{i,b,t})$ for all $i$
   \State $(h_{1:n,b,t},B_{b,t+1})\gets\operatorname{Fishery}(B_{b,t},e_{1:n,b,t})$
   \State $g_{b,t}\gets[B_{b,t}-B_{b,t+1}]_+/K$; $\widetilde R_{\Sigma,b,t}\gets\sum_i h_{i,b,t}/K-w\lambda_k g_{b,t}$
  \EndFor
  \State $(\widehat A_{b,t},\widehat V_{b,t})_{t=0}^{H-1}\gets\operatorname{GAE}(\widetilde R_{\Sigma,b,0:H-1},V_{\phi^{\rm old}},x_{b,0:H})$
 \EndFor
 \For{$p=1,\ldots,E$}
  \For{each minibatch $\mathcal B$ of the collected transitions}
   \State $\theta_i\gets\mathsf{Opt}_{\pi}(\theta_i,\nabla_{\theta_i}L^{\rm clip}_{i,\mathcal B}(\theta_i))$ for all $i$
   \State $\phi\gets\mathsf{Opt}_{V}(\phi,\nabla_\phi\widehat{\mathbb E}_{\mathcal B}(V_\phi(x_{b,t})-\widehat V_{b,t})^2)$
  \EndFor
 \EndFor
 \State $\widehat C_k\gets N_{\rm ep}^{-1}\sum_b(1-B_{b,H}/K)$; $u_k\gets\widehat C_k-\varepsilon$
 \State $I_{k+1}\gets z\operatorname{clip}_{[0,I_{\max}]}(I_k+K_Iu_k)$
 \State $\lambda_{k+1}\gets z[K_Pu_k+I_{k+1}]_+$
\EndFor
\State \Return $\theta$, $\{\boldsymbol\pi^k,\widehat C_k,\lambda_k\}_{k<M}$
\end{algorithmic}
\end{algorithm}

Algorithms~\ref{alg:ippo}--\ref{alg:mappo} describe our implementation, with shaped rewards and capped PI feedback. Algorithm~\ref{alg:primal-dual} instead specifies the terminal-cost oracle and uncapped projected update analyzed in the proofs.


\section{Standard probability tools utilized}\label{app:probability-tools}
We record the precise versions of the standard probability results invoked below. These are textbook tools, not additional contributions of this paper, and their proofs are omitted. Conditional expectation, Markov's inequality, countable subadditivity, and the first Borel--Cantelli lemma are treated in \citet{durrett2019probability}; the bounded-variable exponential estimate is Hoeffding's lemma as treated in \citet{boucheron2013concentration}. Its conditional formulation below uses the same bounded-variable estimate after conditioning. All statements are on a probability space $(\Omega,\mathcal A,\mathbb P)$; identities and inequalities between conditional expectations are understood almost surely. They require no independence unless explicitly stated.

\begin{lemma}[Conditional expectation: tower, pull-out, and order properties]\label{lem:conditional-expectation}
Let $\mathcal H\subseteq\mathcal G\subseteq\mathcal A$ be sub-$\sigma$-algebras and let $X$ be an integrable real random variable. Then
\[
 \mathbb E[\mathbb E[X\mid\mathcal G]\mid\mathcal H]
 =\mathbb E[X\mid\mathcal H],
 \qquad \mathbb E[\mathbb E[X\mid\mathcal G]]=\mathbb E[X].
\]
If $Z$ is a bounded $\mathcal G$-measurable real random variable, then
\[
 \mathbb E[ZX\mid\mathcal G]=Z\mathbb E[X\mid\mathcal G].
\]
The pull-out identity also holds for a finite $\mathcal G$-measurable $Z$ when $X$ and $ZX$ are integrable. Conditional expectation is linear on integrable variables and order preserving: if integrable $X\leq Y$ almost surely, then $\mathbb E[X\mid\mathcal G]\leq\mathbb E[Y\mid\mathcal G]$ almost surely. In particular, deterministic bounds $a\leq X\leq b$ imply $a\leq\mathbb E[X\mid\mathcal G]\leq b$ almost surely. The tower and pull-out identities also hold, with extended nonnegative conditional expectations, for nonnegative variables and nonnegative measurable factors. See \citet{durrett2019probability}.
\end{lemma}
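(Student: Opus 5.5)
The approach is to argue directly from the defining property of conditional expectation. $\mathbb E[X\mid\mathcal G]$ is the almost surely unique $\mathcal G$-measurable integrable variable $Y$ with $\mathbb E[Y\mathbf 1_A]=\mathbb E[X\mathbf 1_A]$ for every $A\in\mathcal G$. Existence comes from Radon--Nikodym, and uniqueness from the fact that a $\mathcal G$-measurable integrable $W$ with $\mathbb E[W\mathbf 1_A]=0$ for all $A\in\mathcal G$ vanishes almost surely (take $A=\{W>0\}$ and $A=\{W<0\}$). Every identity in Lemma~\ref{lem:conditional-expectation} will be proved by checking that the proposed right-hand side is $\mathcal G$- (or $\mathcal H$-) measurable, integrable, and satisfies this defining property. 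Uniqueness then gives the almost-sure equality.

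\textbf{Tower, linearity, and order.} For the tower identity, $\mathbb E[X\mid\mathcal H]$ is $\mathcal H$-measurable. For $A\in\mathcal H\subseteq\mathcal G$ we have
\[
\mathbb E[\mathbb E[X\mid\mathcal G]\mathbf 1_A]=\mathbb E[X\mathbf 1_A]=\mathbb E[\mathbb E[X\mid\mathcal H]\mathbf 1_A],
\]
so $\mathbb E[X\mid\mathcal H]$ satisfies the defining property of $\mathbb E[\mathbb E[X\mid\mathcal G]\mid\mathcal H]$. The unconditional version is the case $\mathcal H=\{\emptyset,\Omega\}$, or simply $A=\Omega$.

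Linearity follows because $a\mathbb E[X\mid\mathcal G]+b\mathbb E[Y\mid\mathcal G]$ satisfies the defining property of $\mathbb E[aX+bY\mid\mathcal G]$.

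For order, set $A=\{\mathbb E[X\mid\mathcal G]>\mathbb E[Y\mid\mathcal G]\}\in\mathcal G$. Then
\[
0\le\mathbb E[(\mathbb E[X\mid\mathcal G]-\mathbb E[Y\mid\mathcal G])\mathbf 1_A]=\mathbb E[(X-Y)\mathbf 1_A]\le0,
\]
so the strictly positive integrand vanishes almost surely and $\Pr(A)=0$. The deterministic bounds $a\le\mathbb E[X\mid\mathcal G]\le b$ follow by comparing $X$ with constants, whose conditional expectations are themselves.

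\textbf{Pull-out identity.} I would build this up in four stages.
\begin{enumerate}
\item For $Z=\mathbf 1_B$ with $B\in\mathcal G$, the claim is the defining property applied to the sets $A\cap B\in\mathcal G$.
\item By linearity it extends to simple $\mathcal G$-measurable $Z$.
\item For bounded $Z$, choose simple $Z_n\to Z$ uniformly with $|Z_n|\le\sup|Z|$. On each side, use dominated convergence in $\mathbb E[Z_nX\mathbf 1_A]$ with dominator $\sup|Z|\,|X|$.
\item For a finite $\mathcal G$-measurable $Z$ with $X,ZX$ integrable, truncate to $Z_n=Z\mathbf 1_{\{|Z|\le n\}}$, which is bounded, so stage 3 applies. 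For fixed $A\in\mathcal G$, let $n\to\infty$ in $\mathbb E[Z_nX\mathbf 1_A]=\mathbb E[Z_n\mathbb E[X\mid\mathcal G]\mathbf 1_A]$.
\end{enumerate}

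\textbf{Main obstacle.} The main difficulty is stage 4. The left side converges by dominated convergence with dominator $|ZX|$. The right side needs a dominator for $Z_n\mathbb E[X\mid\mathcal G]$, and it is not a priori clear that $Z\,\mathbb E[X\mid\mathcal G]$ is integrable. I would first treat $X\ge0$ and $Z\ge0$, where monotone convergence applies on both sides and also yields integrability of $Z\,\mathbb E[X\mid\mathcal G]$. The general case then follows by splitting $X=X^+-X^-$ and $Z=Z^+-Z^-$. This needs $|Z|\,\mathbb E[|X|\mid\mathcal G]$ to be integrable, which the nonnegative case gives, since it equals $\mathbb E[\,\mathbb E[|ZX|\mid\mathcal G]\,]=\mathbb E|ZX|<\infty$.

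\textbf{Nonnegative extension.} The same monotone-convergence route gives the final extended statement. Define $\mathbb E[X\mid\mathcal G]=\lim_n\mathbb E[X\wedge n\mid\mathcal G]$, which is monotone by the order property. The defining property passes to the limit by monotone convergence, and the tower and pull-out identities transfer along the increasing truncations.
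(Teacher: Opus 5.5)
Your proposal is correct. The paper gives no proof of this lemma and simply cites Durrett's textbook, and your argument is essentially the standard proof found there: check the defining property of conditional expectation and invoke almost-sure uniqueness, and for the pull-out identity climb from indicators to simple, bounded, and truncated factors using monotone convergence and a positive/negative-part split. The only slip is cosmetic: in stage~3 the dominator for the right-hand side $\mathbb E[Z_n\,\mathbb E[X\mid\mathcal G]\mathbf 1_A]$ should be $\sup|Z|\,|\mathbb E[X\mid\mathcal G]|$, not $\sup|Z|\,|X|$ (alternatively, use the uniform convergence of $Z_n$ directly).
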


\begin{lemma}[Conditional Hoeffding bound]\label{lem:conditional-hoeffding}
Let $\mathcal G\subseteq\mathcal A$ be a sub-$\sigma$-algebra. Let $X$ be a real random variable with deterministic finite bounds $a\leq X\leq b$ almost surely, where $a\leq b$, and put $m=\mathbb E[X\mid\mathcal G]$. For every fixed $t\in\mathbb R$,
\[
 \mathbb E[\exp\{t(X-m)\}\mid\mathcal G]
 \leq\exp\!\left\{\frac{t^2(b-a)^2}{8}\right\}
 \quad\text{almost surely}.
\]
Equivalently,
\[
 \mathbb E[e^{tX}\mid\mathcal G]
 \leq\exp\!\left\{tm+\frac{t^2(b-a)^2}{8}\right\}.
\]
In particular, if $m=0$ and $|X|\leq c$ almost surely for a deterministic $c\geq0$, the bound is $\exp(t^2c^2/2)$. No independence between $X$ and $\mathcal G$ is assumed. The unconditional version results from choosing the trivial $\sigma$-algebra. See \citet{boucheron2013concentration} for Hoeffding's bounded-variable lemma and \citet{durrett2019probability} for conditional expectation.
\end{lemma}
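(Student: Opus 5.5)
The plan is to prove the bound by a direct convexity argument carried out under the conditional expectation. This avoids any appeal to regular conditional distributions. Only the order, linearity and pull-out properties of Lemma~\ref{lem:conditional-expectation} are used, together with the classical deterministic estimate from the proof of Hoeffding's lemma.

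\textbf{Degenerate case and integrability.} If $a=b$, then $X=a=m$ almost surely and both sides equal one, so assume $a<b$ and fix $t\in\mathbb R$. Since $a\le X\le b$, the variable $e^{tX}$ is bounded and hence integrable. The order property lets us choose a version of $m=\mathbb E[X\mid\mathcal G]$ with $a\le m\le b$ everywhere, so $e^{-tm}$ is a bounded $\mathcal G$-measurable factor.

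\textbf{Step 1: conditional two-point bound.} Convexity of $x\mapsto e^{tx}$ gives the pointwise chord bound
\[
 e^{tx}\le\frac{b-x}{b-a}e^{ta}+\frac{x-a}{b-a}e^{tb},\qquad x\in[a,b].
\]
Substituting $x=X$ and taking $\mathbb E[\cdot\mid\mathcal G]$, linearity and order preservation give almost surely
\[
 \mathbb E[e^{tX}\mid\mathcal G]\le(1-p)e^{ta}+pe^{tb},\qquad p=\frac{m-a}{b-a}\in[0,1].
\]

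\textbf{Step 2: pull out the centering.} By the pull-out identity, $\mathbb E[e^{t(X-m)}\mid\mathcal G]=e^{-tm}\mathbb E[e^{tX}\mid\mathcal G]$. It therefore suffices to prove the deterministic inequality
\[
 -tm+\log\bigl((1-p)e^{ta}+pe^{tb}\bigr)\le \frac{t^2(b-a)^2}{8}
\]
for every $p\in[0,1]$, with $m=a+p(b-a)$. We then apply it pointwise at each $\omega$, using the chosen version of $m$.

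\textbf{Step 3: the deterministic estimate.} Put $u=t(b-a)$. The left-hand side then equals
\[
 L(u)=-pu+\log(1-p+pe^u).
\]
One checks directly that $L(0)=0$ and $L'(0)=0$. Moreover, $L''(u)=q(1-q)\le 1/4$ with $q=pe^u/(1-p+pe^u)\in[0,1]$. Taylor's theorem with Lagrange remainder then gives $L(u)\le u^2/8$, which is the claimed bound.

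\textbf{Consequences.} The equivalent form of the statement follows by multiplying back by $e^{tm}$. For the special case, $|X|\le c$ gives the interval $[-c,c]$ of length $2c$, so the bound is $t^2(2c)^2/8=t^2c^2/2$. Choosing the trivial $\sigma$-algebra recovers the unconditional lemma.

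\textbf{Main obstacle.} The only delicate point is measure-theoretic. The deterministic estimate in Step 3 must be applied with the random quantity $p(\omega)$, which requires a version of $m$ valued in $[a,b]$ everywhere. The order property supplies such a version, modulo a null set. Since all conclusions are stated almost surely, no independence or regular conditional distribution is needed.
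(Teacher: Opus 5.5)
Your proof is correct and complete. The paper gives no proof of this lemma. It lists it among standard tools, cites the unconditional Hoeffding lemma, and remarks only that the conditional form uses the same bounded-variable estimate after conditioning. Made rigorous, that route would apply the textbook lemma to a regular conditional distribution of $X$ given $\mathcal G$, which exists because $X$ is real-valued. It would then identify that distribution's mean and exponential moment with versions of $m$ and $\mathbb E[e^{tX}\mid\mathcal G]$. The standard textbook argument bounds the second derivative of $t\mapsto\log\mathbb E e^{tX}$ by the variance of an exponentially tilted law, and this does not transfer directly to conditional expectations without such a kernel.

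Your chord bound linearizes before conditioning. After Step~1 the only random quantity left is $m(\omega)$, and the rest is the deterministic two-point estimate $-pu+\log(1-p+pe^u)\le u^2/8$, applied pointwise. This gives a self-contained argument that uses only linearity, order preservation, and pull-out from Lemma~\ref{lem:conditional-expectation}. It also makes explicit the one measure-theoretic point: a version of $m$ valued in $[a,b]$ everywhere, so that $e^{-tm}$ is a bounded factor and $p(\omega)\in[0,1]$. The citation route is shorter but leaves exactly this passage from the unconditional to the conditional statement implicit.
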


\begin{lemma}[Markov's inequality and its exponential form]\label{lem:markov}
If $Y\geq0$ almost surely and $\mathbb E Y<\infty$, then, for every $u>0$,
\[
 \mathbb P(Y\geq u)\leq\frac{\mathbb E Y}{u}.
\]
Consequently, for a real random variable $X$, any $t>0$ such that $\mathbb E e^{tX}<\infty$, and any $x\in\mathbb R$,
\[
 \mathbb P(X\geq x)\leq e^{-tx}\mathbb E e^{tX}.
\]
The corresponding lower-tail bound follows by applying this statement to $-X$ when its exponential moment is finite. These inequalities also bound events with strict inequalities. See \citet{durrett2019probability}.
\end{lemma}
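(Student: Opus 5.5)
The plan is to prove the basic inequality through a pointwise indicator bound, and then derive the exponential form by applying it to $e^{tX}$ and using monotonicity of the exponential.

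For the first inequality, fix $u>0$ and compare $Y$ pointwise with the scaled indicator $u\mathbf 1\{Y\geq u\}$. On the event $\{Y\geq u\}$ we have $Y\geq u$. Off that event the indicator vanishes, while $Y\geq0$ almost surely. Hence $Y\geq u\mathbf 1\{Y\geq u\}$ almost surely. Both sides are nonnegative and $\mathbb E Y<\infty$, so monotonicity of the integral gives $\mathbb E Y\geq u\,\mathbb P(Y\geq u)$. Dividing by $u>0$ proves the claim.

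For the exponential form, fix $t>0$ with $\mathbb E e^{tX}<\infty$ and any real $x$. Since $s\mapsto e^{ts}$ is strictly increasing for $t>0$, the events $\{X\geq x\}$ and $\{e^{tX}\geq e^{tx}\}$ coincide. Apply the first part with $Y=e^{tX}$, which is nonnegative and integrable, and with $u=e^{tx}>0$. This yields $\mathbb P(X\geq x)\leq e^{-tx}\mathbb E e^{tX}$. The lower-tail statement follows by the same argument applied to $-X$, assuming $\mathbb E e^{-tX}<\infty$.

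The remark about strict inequalities needs only the inclusions $\{Y>u\}\subseteq\{Y\geq u\}$ and $\{X>x\}\subseteq\{X\geq x\}$, together with monotonicity of $\mathbb P$. There is no genuine obstacle in this argument. The only points requiring care are checking integrability of $e^{tX}$ before invoking the first part, and using $t>0$ so that the event identity holds.
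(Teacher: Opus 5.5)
Your proof is correct and is the standard textbook argument: the pointwise bound $Y\geq u\mathbf 1\{Y\geq u\}$, then the first part applied to $e^{tX}$ with $u=e^{tx}$. The paper omits the proof of this lemma and simply cites Durrett's textbook, where this same argument appears.
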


\begin{lemma}[Countable union bound and simultaneous events]\label{lem:union-bound}
For any sequence $(E_j)_{j\geq1}$ of events in $\mathcal A$,
\[
 \mathbb P\!\left(\bigcup_{j\geq1}E_j\right)
 \leq\sum_{j\geq1}\mathbb P(E_j).
\]
In particular, if $A_1,\ldots,A_m$ satisfy $\mathbb P(A_j)\geq1-\beta_j$ with $\beta_j\geq0$, then
\[
 \mathbb P\!\left(\bigcap_{j=1}^m A_j\right)
 \geq1-\sum_{j=1}^m\beta_j.
\]
A countable intersection of probability-one events has probability one. No independence is required. See \citet{durrett2019probability}.
\end{lemma}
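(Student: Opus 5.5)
The statement immediately above is Lemma~\ref{lem:union-bound}, the countable union bound and its consequences for simultaneous events. I will prove it from countable additivity alone, by disjointification, and then derive the two stated corollaries by complementation.

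\textbf{Countable subadditivity.} Given events $(E_j)_{j\geq1}$, define $F_1=E_1$ and $F_j=E_j\setminus\bigcup_{l<j}E_l$ for $j\geq2$. These sets are measurable, since each is obtained from the $E_l$ by finitely many set operations. They are pairwise disjoint, because an element of $F_j$ lies in no earlier $E_l$ and hence in no earlier $F_l\subseteq E_l$. They also have the same union as the $E_j$: any $\omega$ in the union belongs to $F_j$ for the least index $j$ with $\omega\in E_j$.

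Countable additivity then gives $\mathbb P(\bigcup_jE_j)=\sum_j\mathbb P(F_j)$. Monotonicity, applied to $F_j\subseteq E_j$, yields $\mathbb P(F_j)\leq\mathbb P(E_j)$, and summing proves the displayed inequality. The series may diverge, in which case the bound is trivial.

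\textbf{Finite simultaneous events.} Set $E_j=A_j^{c}$ for $j\leq m$ and $E_j=\emptyset$ for $j>m$. De Morgan's law gives $(\bigcap_{j\leq m}A_j)^c=\bigcup_jE_j$. Subadditivity therefore yields
\[
 \mathbb P\!\left(\left(\bigcap_{j\leq m}A_j\right)^{c}\right)\leq\sum_{j=1}^m(1-\mathbb P(A_j))\leq\sum_{j=1}^m\beta_j.
\]
Taking complements gives the claim.

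\textbf{Countable intersections of probability-one events.} If $\mathbb P(A_j)=1$ for all $j\geq1$, the same complementation bounds the probability of the complement of $\bigcap_jA_j$ by $\sum_j0=0$. Hence the intersection has probability one.

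No independence enters at any step; only additivity, monotonicity, and complementation are used. None of these steps is difficult. The only point needing care is verifying that the disjointified sets $F_j$ are measurable and cover the union, and the explicit least-index argument above handles both.
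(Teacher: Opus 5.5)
Your proof is correct: the disjointification $F_j=E_j\setminus\bigcup_{l<j}E_l$, with countable additivity and monotonicity, gives subadditivity, and both corollaries follow by De Morgan's law and complementation exactly as you describe. The paper omits the proof and defers to Durrett's textbook, where subadditivity is proved by this same disjointification, so your route coincides with the intended one.
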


\begin{lemma}[First Borel--Cantelli lemma]\label{lem:borel-cantelli}
Let $(E_m)_{m\geq1}$ be events in $\mathcal A$. If $\sum_{m=1}^{\infty}\mathbb P(E_m)<\infty$, then
\[
 \mathbb P(E_m\text{ infinitely often})
 =\mathbb P\!\left(\bigcap_{N=1}^{\infty}\bigcup_{m\geq N}E_m\right)=0.
\]
Equivalently, with probability one there is a finite, outcome-dependent index $N$ such that none of the events $E_m$ occurs for $m\geq N$. This direction of Borel--Cantelli requires no independence among the events. See \citet{durrett2019probability}.
\end{lemma}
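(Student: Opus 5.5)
The plan is to prove the first Borel--Cantelli lemma directly from countable subadditivity (Lemma~\ref{lem:union-bound}) and monotonicity of probability, with no independence assumption. Write $E^\ast=\bigcap_{N\geq1}\bigcup_{m\geq N}E_m$ for the event that infinitely many $E_m$ occur. I would first check this set identity: an outcome lies in $E^\ast$ exactly when, for every $N$, some $m\geq N$ has the outcome in $E_m$, which is the same as belonging to infinitely many $E_m$. This justifies the middle expression in the lemma.

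Next, fix an arbitrary $N\geq1$. Since $E^\ast\subseteq\bigcup_{m\geq N}E_m$, monotonicity and Lemma~\ref{lem:union-bound} give
\[
 \mathbb P(E^\ast)\leq\mathbb P\Bigl(\bigcup_{m\geq N}E_m\Bigr)\leq\sum_{m\geq N}\mathbb P(E_m).
\]
The right-hand side is the tail of a convergent series of nonnegative terms, so it tends to zero as $N\to\infty$. The left-hand side does not depend on $N$. Letting $N\to\infty$ therefore gives $\mathbb P(E^\ast)=0$.

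For the equivalent formulation, I would pass to complements via De Morgan. This gives $(E^\ast)^c=\bigcup_{N\geq1}\bigcap_{m\geq N}E_m^c$, and this event has probability one. An outcome in it admits some finite $N$, depending on the outcome, with $\omega\notin E_m$ for all $m\geq N$; that is exactly the stated index form.

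Nothing here is a genuine obstacle. The only point needing care is that the bound on $\mathbb P(E^\ast)$ holds uniformly in $N$ before the limit is taken, so no continuity-of-measure argument is even required. I would add a remark that only subadditivity was used, which is why the lemma needs no independence among the events.
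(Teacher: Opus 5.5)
Your argument is correct and complete. The uniform-in-$N$ bound $\mathbb P(E^\ast)\leq\sum_{m\geq N}\mathbb P(E_m)$, the vanishing tail of a convergent series, and De Morgan for the index form are exactly what is needed, and since only countable subadditivity is used, no independence enters.

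The paper gives no proof of its own here: it lists this lemma among standard tools with proofs omitted and defers to Durrett. Your argument is the standard textbook route. An equally short alternative is that $\sum_m\mathbf 1_{E_m}$ has expectation $\sum_m\mathbb P(E_m)<\infty$ by monotone convergence, hence is finite almost surely, which gives the outcome-dependent index directly.
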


\section{Experimental system and implementation}\label{app:protocol}

The shared proportional--integral controller combines the current measured budget excess with an accumulated contribution from past excesses. The current term responds to the latest episode, while the accumulated term maintains pressure when depletion repeatedly exceeds the budget. Its update is
\begin{equation}\label{eq:pi}
u_k=\widehat C_k-\varepsilon,\quad
I_{k+1}=\operatorname{clip}_{[0,I_{\max}]}(I_k+K_Iu_k),\quad
\lambda_{k+1}=[K_Pu_k+I_{k+1}]_+,
\end{equation}
Here $u_k$ is the measured budget excess and $I_k$ is its accumulated contribution to the price. The gains $K_I=0.03$ and $K_P=1$ set the strengths of the accumulated and current feedback, respectively; $I_{\max}=15$ caps the accumulated contribution. For actor updates, one-sided within-episode loss $g_t=[B_{t-1}-B_t]_+/K$ is attributed to IPPO agents in proportion to catch, while MAPPO receives the team penalty; the fixed scale is $w=2.5$, and PPO uses harvest divided by $K$.

Each episode evolves according to~\eqref{eq:stock}, with initial biomass $B_0=K$, continuous effort $e_i\in[0,1]$, and raw individual reward $R_i=h_i$. The horizon-$H$ terminal metric is $1-B_H/K$. Both actor families use PPO and discounted advantage estimation; IPPO uses local per-agent critics, and MAPPO uses a centralized critic and the joint reward. The experiment compares these learning methods under the same ecological budget.

\begin{table}[H]
\centering\small
\caption{Design of the constrained MARL experiments. Each method--budget--seed combination trains a separate controller for 20,000 epochs. The horizon is $H=60$ \editorialflag{for all experiments.}}\label{tab:protocol}
\begin{tabular}{ll}
\toprule Quantity & Value\\\midrule
Agents; initial stock; carrying capacity & $n=2$, $B_0=K=1000$\\
Growth; catchability; episode horizon & $r=0.3$, $q=0.5$, $H=60$\\
Actions & Continuous effort $e_i\in[0,1]$\\
Actor; critics & Squashed Gaussian; local (IPPO), central (MAPPO)\\
Learning objectives & Individual reward (IPPO); team reward (MAPPO)\\
Constraint feedback & Shared terminal-depletion PI multiplier\\
Discount; GAE parameter & $\gamma=0.99$, $\lambda_{\rm GAE}=0.95$\\
PPO passes; rollout batch & $5$; one complete episode ($60$ steps)\\
Actor learning rate; critic learning rate & $0.03$; $0.06$\\
PPO clipping; entropy coefficient & $0.2$; $0$\\
Latent Gaussian standard deviation & $[0.06,0.8]$\\
\editorialflag{Initial actor bias: IPPO; MAPPO} & $-1.5$; $-2.0$\\
PI gains; integral-state cap & $K_P=1$, $K_I=0.03$, $I_{\max}=15$\\
Shaping scale & $w=2.5$; catch-attributed (IPPO), team (MAPPO)\\
Evaluation reward & Unpenalized undiscounted harvest divided by $K$\\
\editorialflag{Full-run training length} & $20{,}000$ epochs\\
Seeds; constrained training runs & $3$ per method--budget; $2\times6\times3=36$\\
Depletion budgets & $\{0.01,0.05,0.1,0.3,0.4,0.6\}$\\\bottomrule
\end{tabular}
\end{table}

At each training epoch, the learner (1) samples an episode under the current joint actor; (2) records unpenalized returns divided by $K$, terminal depletion, and intermediate stock; (3) forms shaped PPO advantages, assigning IPPO's one-sided loss in proportion to catch or MAPPO's loss to the team; (4) updates the actors and critics; and (5) advances the common PI state by~\eqref{eq:pi}. Unpenalized normalized return and terminal cost, rather than shaped training reward, are used for evaluation. An idealized oracle iteration would instead solve~\eqref{eq:lagrangians} at each price and use~\eqref{eq:projected-dual}; these are distinct algorithms.

The main-text trajectories report the per-epoch mean and minimum--maximum range across three seeds for each method and budget. These displays summarize the realized training measurements.

\subsection{Reading the experimental plots}\label{app:plot-reading}

An epoch is a training episode/update index, whereas $t$ indexes physical steps within an episode of horizon $H=60$. The horizontal label ``Epoch (last 2000)'' in Figures~\ref{fig:convergence}--\ref{fig:lambda-main} resets the displayed coordinate to the start of the final 2,000-epoch window; zero on that axis is not the beginning of training. \editorialflag{Training lasts 20,000 epochs; the displayed axes cover only the final 2,000.}

In the plot labels, $T$ denotes the episode endpoint, corresponding to $H$ in the manuscript. Thus $B_T/K=B_H/K$ is terminal biomass as a fraction of carrying capacity $K$, and $c=1-B_T/K$ is dimensionless terminal depletion. Zero depletion means that the terminal stock equals carrying capacity; depletion one means an empty terminal stock. The budget $\varepsilon$ is the allowed depletion, equivalently requiring terminal stock ratio at least $1-\varepsilon$. A cost above its reference line is a violation in the displayed measurement. Neither the shaded seed range nor a single cost observation is a probability of collapse.

For the three main-text plots, panels (A) and (B) denote IPPO and MAPPO. The six legend entries are the budgets $\varepsilon=0.01,0.05,0.1,0.3,0.4,0.6$, shown respectively in dark blue, blue, cyan, green, orange, and red. At each displayed epoch, the central curve is the mean across the three seeds and the light envelope runs from their minimum to maximum. The title notation ``mean $\pm$ [min,max]'' denotes this range, not a standard deviation or confidence interval. In particular, the plotted $\bar c$ is an across-seed epochwise mean, $S^{-1}\sum_{s=1}^{S}c_{s,k}$ with $S=3$, whereas $\bar C_M$ in the theory averages expected costs across policy epochs. Dotted horizontal lines in the depletion plot are budget targets; the boundaries of the envelopes in the return and price plots are seed extrema.

The trainers log undiscounted harvest divided by carrying capacity. An individual return is $K^{-1}\sum_{t=0}^{H-1}h_{i,t}$, and both IPPO's joint return and MAPPO's team return are $K^{-1}\sum_{t=0}^{H-1}\sum_i h_{i,t}$. Their aggregate returns therefore measure the same quantity when the runs share $K$ and $H$. The label ``raw'' excludes the ecological penalty; it does not mean unnormalized biomass. The return can exceed one because harvest accumulates over an episode in which stock regenerates. An individual return and a team total are different aggregates. This reporting convention does not redefine the theoretical discounted value $J_i^\gamma$.

The vertical price axis is the nonnegative ecological PI multiplier $\lambda$ in~\eqref{eq:pi}, shared by both harvesters. It is unrelated to the advantage-estimation parameter $\lambda_{\rm GAE}=0.95$. Price magnitudes depend on the reward and penalty scales and are not resource fractions.

\section{Interpretation of the experimental curves}\label{app:sensitivity}\label{app:diagnostics}\label{app:tables}\label{app:dual}

The experimental evidence consists of Figures~\ref{fig:convergence}--\ref{fig:lambda-main}. Each depicts the final 2,000 training epochs for both methods and all six budgets. Figure~\ref{fig:convergence} shows strong stock preservation at the smallest budget, where the depletion curves lie below the reference line. At intermediate budgets, both the means and seed envelopes vary around the prescribed levels. These movements matter because a constraint is an upper bound: being near its reference line is not equivalent to remaining below it. The envelopes also show that a mean can conceal individual-seed exceedances. The displayed window supports an assessment of late-training regulation, not a full-training feasibility calculation. Conversely, individual exceedances do not disprove average-epoch feasibility, which permits compensation by slack epochs. The three-seed envelope measures observed variation, not uncertainty in an expected-cost estimate. 

Figure~\ref{fig:reward} shows low harvest return under the tightest budget and higher returns at larger budgets. The team curves for MAPPO at budgets $.4$ and $.6$ are close, indicating similar harvest returns at these two budgets. The return comparison must be read together with depletion: more harvest in a violating epoch does not establish a better feasible solution. Both panels use total unpenalized harvest divided by carrying capacity, making their aggregate rewards comparable in scale; neither panel displays the discounted theoretical objective.

Figure~\ref{fig:lambda-main} records continued feedback. IPPO's tightest-budget price remains positive while declining, whereas its budget-$.05$ price rises. MAPPO's intermediate-budget prices change direction during the window, and its budget-$.6$ price stays near zero. These are evolving controller outputs, not certified equilibrium multipliers. The different trajectories also prevent a universal monotone ordering of learned prices by budget.

For a fixed realized cost and integral state, increasing $\varepsilon$ decreases the error $u=\widehat C-\varepsilon$, hence weakens the PI update. Across separately trained runs, the policy, realized depletion, and integral history also change. A near-zero price is compatible with little sustained penalty pressure in the displayed regime, but does not certify feasibility over the whole training sequence. Likewise, a bounded price under the capped PI rule is not the multiplier-growth certificate of Proposition~\ref{prop:average-epoch-certificate}.

A monotone shadow-price interpretation belongs to an ideal cooperative optimum. For example, let $\mu_1,\mu_2$ be exact Lagrangian maximizers at prices $\lambda_1,\lambda_2\geq0$, respectively, each feasible and complementary at budgets $\varepsilon_1<\varepsilon_2$. Their two optimality inequalities, added together, give $(\lambda_1-\lambda_2)(C(\mu_2)-C(\mu_1))\geq0$. If $\lambda_2>0$, complementarity gives $C(\mu_2)=\varepsilon_2>\varepsilon_1\geq C(\mu_1)$, so $\lambda_1\geq\lambda_2$; if $\lambda_2=0$, the same ordering follows from nonnegativity. This argument assumes exact supporting prices and complementary feasible solutions. The observed PPO--PI trajectories satisfy no such certificate, and game equilibria use unilateral rather than joint optimality comparisons. The experimental price curves therefore diagnose continuing adaptation rather than establish a monotone shadow-price law.

\section{\editorialflag{Extended literature review}}\label{app:extended-context}

The economic lineage runs from open-access resource use~\citep{gordon1954economic} to strategic harvesting of a common stock~\citep{levhari1980fish}. Institutional research shows that communities can develop diverse arrangements for governing commons~\citep{ostrom1990governing,ostrom2010polycentric}; collapse is not an inevitable consequence of sharing a resource, nor is one centralized intervention a universal remedy. In MARL, sequential social dilemmas treat cooperation as a property of policies rather than single actions~\citep{leibo2017multiagent}. \citet{perolat2017commons} directly study common-pool resource appropriation by independently learning agents, relating sustainability to exclusion and inequality. That work provides a close environmental precedent; our emphasis is the explicit resource constraint and its policy-sequence guarantees. Inequity aversion~\citep{hughes2018inequity}, learned reciprocity~\citep{eccles2019learning}, and incentives supplied by other agents~\citep{yang2020incentivize} provide mechanisms for improving collective outcomes. We instead place the environmental requirement in an explicit constraint and adapt a shared price from measured budget excess. The learning procedure supplies the shared price that regulates depletion.

GovSim links AI-agent sustainability to communication, reasoning, and self-government~\citep{piatti2024collapse}. Our PPO agents have a different information and learning architecture, so that work motivates the shared-resource problem rather than serving as a numerical baseline. The broader distinction between agent welfare and external harm is illustrated by algorithmic collusion~\citep{calvano2020collusion} and organized by the multi-agent-risk taxonomy~\citep{hammond2025risks}. Learning models that omit explicit competitor behavior can also converge to different economic outcomes depending on assumptions and initial conditions~\citep{cooper2015pricing}; independent learning describes how agents train; it does not establish that their policies form an equilibrium. Finally, Melting Pot demonstrates why performance against familiar partners need not generalize to new populations~\citep{leibo2021melting}. Our study examines resource regulation in a fixed two-agent population.

A close environmental precedent is the renewable-fishery study of~\citet{danassis2022signals}. Decentralized harvesters observe a common periodic signal, enabling temporal conventions such as taking turns or leaving fallow periods. Their experiments connect policy coordination to sustainable harvesting. Our epoch-dependent policies also allocate resource use over time, with the sequence generated by depletion-price feedback and evaluated against an explicit average-epoch budget. The cooperative and strategic results relate solutions of the resulting unconstrained problems to feasibility, reward, and unilateral-deviation bounds. This comparison connects the ecological role of temporal coordination to the constrained solution concept analyzed here.

Methods based on state--action frequencies provide classical foundations for constrained MDPs and randomized stationary solutions~\citep{altman1999cmdp}. In convex optimization, averaging solutions of Lagrangian subproblems can recover feasible solutions~\citep{larsson1999ergodic}, with finite-iteration feasibility and objective bounds under Slater-type assumptions~\citep{nedic2009primal}. These results explain the mathematical role of averaging. In constrained RL, \citet{le2019batch} use a Lagrangian game and batch policy optimization to return a policy mixture, while \citet{miryoosefi2019convex} reduce convex constraints on expected measurements to repeated scalar-reward RL and return a mixture of the resulting policies. Thus both policy mixing and reduction to unconstrained RL have clear precedents. Our cooperative result develops bounds for the specified reset-episode terminal-cost construction; the strategic certificate additionally addresses unilateral deviations against fixed opponents' policy sequences. Long-term constrained online optimization similarly separates cumulative constraint control from per-round feasibility~\citep{mahdavi2012longterm}. Constrained actor--critic methods couple policy and multiplier learning through stochastic approximation~\citep{borkar2005actor}, while modern constrained RL gives strong-duality results under suitable assumptions~\citep{paternain2019duality}. State-augmented constrained RL treats evolving multipliers as part of policy execution~\citep{calvo2024state}. Our framework follows the game and cooperative epoch constructions of~\citet{das2025cmg,das2025lagrangian}, with a separately specified reset-episode value criterion. Unlike conditioning one learned actor on a multiplier, the ideal construction invokes an unconstrained oracle at each price and retains its returned policy.

When a player's admissible deviations depend on the other players, the relevant concept is generalized Nash equilibrium, with classical foundations in constrained games~\citep{rosen1965concave} and penalty methods for coupled feasible sets~\citep{facchinei2010penalty}. Classical constrained Markov-game theory gives sufficient conditions for stationary Nash equilibria~\citep{altman2000games}. \citet{chen2022correlated} instead develop a primal--dual method for a surrogate constrained correlated-equilibrium concept, with duality-gap and violation bounds. Its deviation tests differ from Definition~\ref{def:epoch-nash}: sharing a random policy index reproduces average values but does not by itself establish their equilibrium criterion. In the dynamic setting, constrained Markov potential games can fail strong duality~\citep{alatur2024constrained}; their independent-learning guarantees require additional structure and constraint qualifications~\citep{jordan2024independent}. We do not assume that the fishery is a potential game. The $\alpha$-potential approach of~\citet{das2024alphapotential} extends the structural perspective to games with bounded departures from the potential property. The present certificates do not assume a bound on departure from a potential; they instead assume that each unconstrained Lagrangian game is solved to within a specified bound on each player's gain from deviating. Consequently, they do not replace the algorithmic learning guarantees available under potential or approximate-potential structure. Our average-epoch deviation test also allows an agent to redistribute its constraint budget across epochs, a freedom excluded when every alternative policy must satisfy the budget at its own epoch. The resulting price-dispersion term makes explicit why a cooperative reward guarantee or a feasibility certificate alone does not establish strategic stability.

CPO studies constrained policy search with near-constraint satisfaction at individual iterations~\citep{achiam2017constrained}. Its multi-agent extensions include MACPO and MAPPO-Lagrangian for cooperative robot control~\citep{gu2023safe}. ACPO studies constrained average-reward MDPs~\citep{agnihotri2024acpo}, whose physical-time criterion differs from our average of discounted reset episodes. Predictive safety filters instead intervene on proposed actions to maintain state constraints~\citep{wabersich2021filter}; this addresses a different requirement from an average terminal-stock budget. Our practical actors use PPO, GAE, IPPO, and MAPPO~\citep{schulman2017proximal,schulman2016gae,dewitt2020ippo,yu2022surprising}, with PI feedback motivated by oscillation and overshoot in Lagrangian learning~\citep{stooke2020responsive}. Stable opponent shaping separately studies the stability of interacting learning dynamics~\citep{letcher2019stable}. Neither that result nor PPO clipping certifies the present learners' oracle accuracy or equilibrium. We assess their measured depletion and reward, and state separately the assumptions under which the ideal policy-sequence method has guarantees.

\end{document}